\documentclass{article}
\usepackage[arxiv]{colm2026_conference}

\usepackage{microtype}
\usepackage{amsmath}
\usepackage{amssymb}
\usepackage{mathtools}
\usepackage{amsthm}
\usepackage{booktabs}
\usepackage{graphicx}
\usepackage{wrapfig}
\usepackage{subcaption}
\usepackage{xcolor}
\usepackage[most,skins,theorems]{tcolorbox}
\usepackage{tikz}
\usepackage{enumitem}
\usepackage{url}
\usepackage{hyperref}
\usepackage{cleveref}

\usetikzlibrary{arrows.meta,positioning,fit}

\definecolor{darkblue}{rgb}{0, 0, 0.5}
\definecolor{promptbg}{RGB}{232,242,255}
\definecolor{promptframe}{RGB}{92,140,214}
\definecolor{userpromptbg}{RGB}{232,242,255}
\definecolor{userpromptframe}{RGB}{92,140,214}
\definecolor{developerpromptbg}{RGB}{244,236,255}
\definecolor{developerpromptframe}{RGB}{124,82,171}
\definecolor{evalbg}{RGB}{241,252,241}
\definecolor{evalframe}{RGB}{77,162,94}
\definecolor{solutionbg}{RGB}{255,246,232}
\definecolor{solutionframe}{RGB}{214,139,42}

\hypersetup{
  colorlinks=true, citecolor=darkblue, linkcolor=darkblue, urlcolor=darkblue,
  pdftitle={ProofCouncil: An LLM Agent for Solving Open Mathematical Problems},
  pdfauthor={Johannes Schmitt, Tim Gehrunger, Jasper Dekoninck, Gergely Bérczi, Uri Kreitner, Liam Price, David Holmes}
}
\setcitestyle{authoryear,round,citesep={;},aysep={,},yysep={;}}

\newtcolorbox{promptbox}[1]{
  enhanced,
  breakable,
  arc=2.2mm,
  boxrule=0.8pt,
  left=1.3mm,right=1.3mm,top=1.0mm,bottom=1.1mm,
  colback=promptbg,
  colframe=promptframe,
  colbacktitle=promptframe!34,
  coltitle=black,
  fonttitle=\bfseries,
  title={#1}
}

\newtcolorbox{userpromptbox}[1]{
  enhanced,
  breakable,
  arc=2.2mm,
  boxrule=0.8pt,
  left=1.3mm,right=1.3mm,top=1.0mm,bottom=1.1mm,
  colback=userpromptbg,
  colframe=userpromptframe,
  colbacktitle=userpromptframe!34,
  coltitle=black,
  fonttitle=\bfseries,
  title={#1}
}

\newtcolorbox{developerpromptbox}[1]{
  enhanced,
  breakable,
  arc=2.2mm,
  boxrule=0.8pt,
  left=1.3mm,right=1.3mm,top=1.0mm,bottom=1.1mm,
  colback=developerpromptbg,
  colframe=developerpromptframe,
  colbacktitle=developerpromptframe!30,
  coltitle=black,
  fonttitle=\bfseries,
  title={#1}
}

\newtcolorbox{evalbox}[1]{
  enhanced,
  breakable,
  arc=2.2mm,
  boxrule=0.8pt,
  left=1.3mm,right=1.3mm,top=1.0mm,bottom=1.1mm,
  colback=evalbg,
  colframe=evalframe,
  colbacktitle=evalframe!30,
  coltitle=black,
  fonttitle=\bfseries,
  title={#1}
}

\newtcolorbox{solutionbox}[1]{
  enhanced,
  breakable,
  arc=2.2mm,
  boxrule=0.8pt,
  left=1.3mm,right=1.3mm,top=1.0mm,bottom=1.1mm,
  colback=solutionbg,
  colframe=solutionframe,
  colbacktitle=solutionframe!34,
  coltitle=black,
  fonttitle=\bfseries,
  title={#1}
}

\crefname{section}{Sec.}{Secs.}
\crefname{subsection}{Sec.}{Secs.}
\crefname{subsubsection}{Sec.}{Secs.}
\crefname{paragraph}{Sec.}{Secs.}
\crefname{appendix}{App.}{Apps.}
\crefname{equation}{Eq.}{Eqs.}
\crefname{figure}{Fig.}{Figs.}
\crefname{table}{Tab.}{Tabs.}

\newtheorem{theorem}{Theorem}[section]
\newtheorem{proposition}[theorem]{Proposition}
\newtheorem{lemma}[theorem]{Lemma}

\crefname{theorem}{Thm.}{Thms.}
\Crefname{theorem}{Theorem}{Theorems}
\crefname{proposition}{Prop.}{Props.}
\Crefname{proposition}{Proposition}{Propositions}
\crefname{lemma}{Lemma}{Lemmas}
\Crefname{lemma}{Lemma}{Lemmas}
\crefname{remark}{Rem.}{Rems.}
\Crefname{remark}{Remark}{Remarks}

\providecommand{\Z}{\mathbb{Z}}
\providecommand{\one}{\mathbf{1}}
\providecommand{\gcdop}{\operatorname{gcd}}

\title{ProofCouncil: An LLM Agent for Solving Open Mathematical Problems}

\author{
\makebox[\textwidth][c]{%
\begin{tabular}[t]{c}
Johannes Schmitt$^1$, Tim Gehrunger$^1$, Jasper Dekoninck$^1$, \\
Gergely B\'erczi$^2$, Uri Kreitner$^1$, Liam Price$^3$, David Holmes$^4$ \\
{\normalfont $^1$ETH Zurich, $^2$Aarhus University, $^3$Independent Researcher, $^4$Leiden University}\\
{\normalfont Correspondence: \texttt{johannes.schmitt@math.ethz.ch}}
\end{tabular}}
}

\begin{document}

\maketitle
\vspace{-7mm}
\begin{center}
\raisebox{-0.16em}{\includegraphics[height=1em]{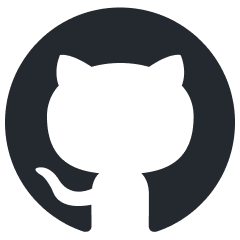}} \url{https://github.com/eth-sri/proof-council}\\
\end{center}
\vspace{2mm}

\begin{abstract}
Large language models (LLMs) have shown increasing promise in solving open problems in mathematics. However, their performance can be further improved through agentic workflows tailored to real-world mathematical practice. To this end, we introduce ProofCouncil, a mathematical agent that is designed to tackle open problems using an author-critic architecture. ProofCouncil served as a submission to the second batch of FirstProof, a challenge consisting of 10 real-world mathematical problems that agents must solve autonomously. Its submissions for 6 of the 10 problems were judged by the referees to be correct up to at most minor revisions, showing the best performance among participating teams. We also evaluate ProofCouncil on 30 open problems collected from mathematical researchers. Among the 21 solutions that received human feedback, 5 were judged completely correct, 2 more were judged promising pending final verification, and a further 8 contained useful partial progress. In this short paper, we describe the development of ProofCouncil and the agent-building library used to create it, which we release as open source to the community.
\end{abstract}

\section{Introduction}
\label{sec:introduction}

Large language models (LLMs) have recently demonstrated remarkable capabilities in mathematics by solving open problems of varying interest \citep{alon2026remarks,schmitt2025extremaldescendantintegralsmoduli,aletheia}. This progress has been accompanied by the development of advanced agents designed for mathematical problem-solving \citep{aletheia,zhao2026rma,zheng2026ai,ju2026automated}, further highlighting the potential of LLMs in this domain. In this context, the second iteration of the FirstProof challenge \citep{firstproof,abouzaid2026proofsecondbatch} was launched to evaluate AI systems on a fixed set of 10 problems with no publicly available solution. Participants were required to submit an agent that autonomously attempted all 10 problems within a 24-hour window, using only publicly available models and tools.

In this work, we present ProofCouncil, a mathematical agent submitted to the FirstProof challenge. As shown in \cref{fig:overview}, ProofCouncil follows an author-critic architecture: an author agent iteratively writes and revises a proof, while a critic agent evaluates each version and provides feedback. At every round, the author may also request targeted assistance from two auxiliary sources: a council of additional LLMs outside the main author-critic loop, and a compute agent for computer algebra system (CAS) computations. Their responses are provided to the author in the next iteration.
To maintain consistency and allow the critic to track progress, the critic generally retains its conversation history across rounds, but this history is reset every $k$ rounds to provide a more independent review. If a freshly initialized critic accepts the proof, the proof is returned to the user.

We open-source both ProofCouncil and the underlying agent-building library. The library represents agentic systems as conditional directed acyclic graphs (DAGs), enabling flexible structures of calls between different agents and models. It also provides a simple interface for implementing new agents and workflows, enabling future researchers to build on our work and create their own agentic systems.

In the FirstProof challenge, ProofCouncil solved 6 of the 10 problems up to at most minor revisions and showed the best performance among participating teams. As a secondary evaluation, we also reached out to mathematical researchers and collected 30 open problems across various fields of mathematics. Solution attempts were sent to the question authors for feedback. Among 21 received responses, 5 were judged to be correct solutions, 2 were considered promising pending final verification, and 8 more contained useful partial progress towards a solution. Notably, no expert reported that an output claiming to solve the stated problem was mathematically incorrect. However, two outputs were judged to have solved easier versions of the submitted problem, underscoring problem interpretation as a key failure mode. We do not release the solutions to these researcher-provided problems, as they are the intellectual property of the researchers who shared them with us and may be used by them at their discretion.

Our key contributions are as follows:
\begin{itemize}
\item We introduce ProofCouncil, a mathematical agent designed to solve complex problems through an author-critic architecture (\cref{sec:architecture}).
\item We develop and release an open-source agent-building library that enables flexible and efficient construction of agentic systems as conditional DAGs (\cref{sec:library}).
\item We demonstrate the effectiveness of ProofCouncil by solving 6 out of 10 problems (up to at most minor revisions) in the FirstProof challenge and producing correct solutions to 5 open problems collected from mathematical researchers (\cref{sec:evaluation}).
\end{itemize}

\begin{figure}[t]
\centering
\providecommand{\lexeme}[1]{\texttt{#1}}
\definecolor{accentblue}{RGB}{88,139,202}
\definecolor{lightgreen}{RGB}{220,245,230}
\definecolor{lightred}{RGB}{250,225,225}
\definecolor{darkgreen}{RGB}{40,167,69}
\definecolor{darkred}{RGB}{220,53,69}
\definecolor{textgray}{RGB}{120,120,120}
\definecolor{lightgray}{RGB}{240,240,240}
\definecolor{gray2}{HTML}{FCFCFC}
\definecolor{overviewblue}{HTML}{347bc6}
\definecolor{purple}{RGB}{155, 89, 182}
\usetikzlibrary{arrows.meta,positioning,fit,shadows.blur}

\noindent\resizebox{0.98\linewidth}{!}{%
\begin{tikzpicture}[
    node distance=7mm and 8mm,
    base/.style={
        draw=gray,
        fill=lightgray,
        rounded corners=3pt,
        font=\sffamily,
        blur shadow={shadow xshift=0.5pt, shadow yshift=-0.5pt, shadow opacity=20}
    },
    stagebox/.style={
        base,
        minimum width=2.25cm,
        minimum height=0.95cm,
        align=center,
        text width=2.25cm,
        inner sep=3pt,
        font=\sffamily\small
    },
    agentbox/.style={
        stagebox,
        fill=accentblue!20,
        draw=accentblue,
        minimum width=2.45cm
    },
    authorbox/.style={
        stagebox,
        fill=purple!13,
        draw=purple
    },
    filebox/.style={
        stagebox,
        fill=lightgray!20,
        draw=gray,
        text width=2.25cm,
        minimum width=2.45cm
    },
    auxbox/.style={
        stagebox,
        fill=gray2,
        draw=accentblue!65,
        text width=2.0cm,
        minimum height=0.82cm,
        font=\sffamily\scriptsize
    },
    packetbox/.style={
        stagebox,
        fill=accentblue!12,
        draw=accentblue!75,
        text width=2.1cm
    },
    gatebox/.style={
        stagebox,
        fill=lightgray!20,
        draw=gray,
        text width=2.15cm,
        minimum width=2.35cm
    },
    resultbox/.style={
        base,
        minimum height=0.58cm,
        minimum width=1.55cm,
        align=center,
        font=\sffamily\bfseries\small
    },
    accept/.style={resultbox, fill=lightgreen, draw=darkgreen, text=darkgreen},
    reject/.style={resultbox, fill=lightred, draw=darkred, text=darkred},
    label/.style={
        font=\sffamily\tiny,
        text=textgray,
        align=center,
        inner sep=1pt
    },
    arrow/.style={
        ->,
        thick,
        line width=0.4mm,
        color=overviewblue!80,
        >={Triangle[scale=0.6]}
    },
    greenarrow/.style={
        arrow,
        color=darkgreen!85
    },
    optional/.style={
        arrow,
        dashed,
        color=overviewblue!58
    },
    feedback/.style={
        arrow,
        color=darkred!75
    }
]

\node[stagebox, fill=gray2, draw=gray] (problem) at (0,0) {\textbf{Problem}\\[-1pt]{\scriptsize statement}};

\node[authorbox, right=0.5cm of problem] (author) {\textbf{Author}\\[-1pt]{\scriptsize edits proof}};

\node[agentbox, right=3.4cm of author, yshift=0.65cm] (critic) {\textbf{Stateful Critic}\\[-1pt]{\scriptsize Same conversation}};
\node[agentbox, right=3.4cm of author, yshift=-0.65cm] (fresh) {\textbf{Fresh Critic}\\[-1pt]{\scriptsize New conversation}};
\node[auxbox, right=0.72cm of author, yshift=1.15cm] (council) {\textbf{LLM Council}\\[-1pt]Other LLMs};
\node[auxbox, right=0.72cm of author, yshift=-1.15cm] (compute) {\textbf{Compute Node}\\[-1pt]CAS worker};

\node[accept, right=0.65cm of fresh] (ship) {Return};



\draw[arrow] (problem.east) -- (author.west);
\coordinate (criticsplit) at ([xshift=0.58cm]author.east);
\draw[arrow] (author.east) -- (criticsplit) |- node[label, below, pos=0.74] {other rounds} (critic.west);
\draw[arrow] (criticsplit) |- node[label, above, pos=0.74] {Every $k$ rounds} (fresh.west);
\draw[optional] (author.east) -- ++(0.29,0) |- (council.west);
\draw[optional] (author.east) -- ++(0.29,0) |-  (compute.west);
\draw[optional] (council.north) -- ++(0,0.1) -| ([xshift=0.25cm]author.north);
\draw[optional] (compute.south) -- ++(0,-0.1) -| ([xshift=0.25cm]author.south);

\draw[greenarrow] (critic.south) -- node[label, right, pos=0.5] {fresh audit} (fresh.north);
\draw[feedback] (critic.north) -- node[label, right, pos=0.5] {rejects proof} ++(0,0.65) -| (author.north);

\draw[greenarrow] (fresh.east) -- (ship.west);
\draw[feedback] (fresh.south) -- node[label, right, pos=0.5] {rejects proof} ++(0,-0.65) -| (author.south);



\end{tikzpicture}%
}
\caption{Overview of ProofCouncil. The author agent iteratively edits the proof in response to feedback from a critic. Every $k$ rounds, the stateful critic is reset. The author may optionally request help from other LLMs or a compute node.}
\vspace{-1mm}
\label{fig:overview}
\end{figure}

\section{ProofCouncil}
\label{sec:system}

In this section, we first provide an overview of the system architecture of ProofCouncil and then discuss the library we developed to build ProofCouncil as conditional DAGs of agent calls. Prompts are provided in \cref{app:prompts}.

\subsection{ProofCouncil Workflow}
\label{sec:architecture}

The overall workflow of ProofCouncil is shown in \cref{fig:overview}. The system proceeds iteratively until one of four stopping conditions is met: a solution is accepted, the maximum number of rounds is reached, the total cost budget is exhausted, or the timeout is reached. We describe each component in turn.

\vspace{-1mm}
\paragraph{Author}
The author is responsible for producing and revising the proof. It uses GPT-5.5-Pro (xhigh)\footnote{Parenthetical labels such as (xhigh) or (high) denote the reasoning-effort setting used for the model calls.} with access to built-in code execution and web search tools. Through code execution, the author maintains and edits three files:
\begin{itemize}
\item \texttt{answer.tex}: the main \LaTeX{} file containing the proof.
\item \texttt{research\_notes.tex}: a scratchpad for ideas, failed attempts, and partial results that are not part of the main proof but may help the author or critic track progress.
\item \texttt{references.bib}: a BibTeX file containing the references cited in the proof.
\end{itemize}

At each iteration, the author receives the latest critic feedback, any auxiliary information produced by the council or compute node, and the current state of all files. It then updates the proof and, when useful, its research notes and bibliography.

The author can also request auxiliary help by emitting structured XML tags in its output. A \texttt{<council>} tag contains a specific question to the LLM council, while a \texttt{<compute\_agent>} tag contains a question for the compute node. The resulting responses are passed back to the author in the next iteration. Finally, the author must emit a \texttt{<ready>} tag indicating whether it believes it has found a complete and correct solution.

\vspace{-1mm}
\paragraph{Critic}
The critic is responsible for assessing the correctness of the current proof. It receives the files written by the author and provides feedback on gaps, errors, or missing justifications. Like the author, it uses GPT-5.5-Pro (xhigh) with access to the same tools.

By default, the critic is stateful. Each new version of the proof is appended to the critic's conversation history, allowing the critic to provide consistent feedback. To reduce path dependence and obtain more independent feedback, we reset the critic every $k$ rounds. In our experiments, we use $k=3$.

The critic also emits an \texttt{<answer\_ready>} tag indicating whether it considers the proof complete and correct. If the stateful critic accepts the proof, a fresh critic is called for an additional audit. The proof is returned as a solution only if the fresh critic also accepts it, and the author has marked the answer as ready.

\vspace{-1mm}
\paragraph{LLM council}
The LLM council is an optional auxiliary component that provides additional feedback outside the main author-critic loop. When called, each council member receives all files written by the author, together with the specific question posed by the author. The council currently consists of Gemini 3.1 Pro (high), Claude Opus 4.7 (max), and GPT-5.5-Pro (xhigh). The models answer independently, without seeing one another's responses. Their answers are then given to the author as additional context in the next iteration.

\vspace{-1mm}
\paragraph{Compute node}
The compute node is an optional auxiliary component for code execution and computer algebra system (CAS) computations. It is implemented as a Codex agent using GPT-5.5 (xhigh), with access to specialized mathematical software including SageMath \citep{sagemath}, GAP \citep{GAP4}, Singular \citep{DGPS}, and Pari \citep{PARI2}. The compute node receives all files written by the author, together with the author's specific question and returns a response that may include code, CAS output, or explanatory analysis. This response is provided to the author in the next iteration. In addition, the edited code repository is added to the file system available to the author, allowing it to inspect or reuse the compute node's outputs in subsequent reasoning.

\subsection{Agents as DAGs}
\label{sec:library}

To create ProofCouncil, we developed an open-source library for constructing agentic systems as conditional directed acyclic graphs (DAGs). We briefly describe its main features here.

\vspace{-1mm}
\paragraph{DAG representation}
In our library, an agent is represented as a DAG whose nodes can be LLM calls, custom Python blocks, CLI-based compute agents, deterministic gates (such as compilation checks), or control-flow blocks, with edges encoding the dependencies between them. Nodes may themselves be agents, enabling hierarchical compositions of models and workflows. An LLM node, for instance, is defined by parameters such as a prompt template, a model name, and a set of outputs. The prompt is rendered at runtime using the outputs of upstream nodes, and the outputs produced by a node can in turn serve as inputs to downstream nodes. This abstraction supports complex information flow while keeping the structure of the system explicit.

\vspace{-1mm}
\paragraph{Conditional execution}
Plain DAGs are not expressive enough for many agentic workflows because the set of calls to execute may depend on intermediate outputs. For instance, in ProofCouncil, the LLM council is called only when the author emits a corresponding tag. To support this behavior, edges can be equipped with conditions that determine whether a downstream node should execute based on the outputs of its parents.

Although the workflow in \cref{fig:overview} contains a feedback loop from the critic to the author, it can be represented as a conditional DAG by unrolling the loop for a fixed number of rounds. For readability, the library supports bounded repeat blocks that represent loops, producing a structure that is easier to read and edit while still executing as a finite conditional DAG.

\paragraph{Agent execution}
To execute an agent, the library computes a topological order of the nodes in the DAG. A node is run as soon as all of its dependencies have completed and its edge conditions are satisfied. This enables independent nodes to execute in parallel, improving efficiency in workflows with multiple auxiliary calls or independent branches.

\paragraph{User interface}
We also built a custom user interface for editing and running agents, inspired by Blender nodes \citep{blender}. The interface allows users to visually construct and modify the DAG structure, edit prompt templates, add or remove outputs, and monitor the execution of each node. Outputs are specified by requiring models to emit structured YAML tags. Custom components can be added as nodes, and control-flow constructs such as conditionals and repeat blocks can be created visually. Screenshots are provided in \cref{app:library}.

\paragraph{Human-in-the-loop workflows}
The library also supports human-in-the-loop workflows: a human can act as a node in the DAG, interchangeable with any model node. When such a node is reached, the run pauses, the task is presented in the user interface, and execution resumes with the submitted answer. This enables a mathematician, among others, to review intermediate proofs or answer the author's questions mid-run.

\section{Evaluation}
\label{sec:evaluation}

We evaluate ProofCouncil on two sets of problems: a set of open problems collected from mathematical researchers, primarily at ETH Zurich, and the 10 problems from FirstProof. The former set was collected to test the system on a broader and more diverse collection of research problems. We also used feedback from these runs to improve ProofCouncil.

\begin{table}[t]
\centering
\vspace{-5mm}
\caption{Summary of human feedback on the open-question evaluation.}
\vspace{-2mm}
\label{tab:open-question-feedback}
\begin{tabular}{@{}lc@{}}
\toprule
Category & Count \\
\midrule
Problems evaluated & 30 \\
Researcher reviews received & 21 \\
\midrule
Complete solutions & 5 \\
Possibly complete solutions & 2 \\
Meaningful partial progress & 8 \\
No errors, but no progress & 4 \\
Misinterpreted problems & 2 \\
\bottomrule
\end{tabular}
\end{table}

\subsection{Results on Open Questions}

\begin{figure}[t]
\centering
\includegraphics[width=\textwidth]{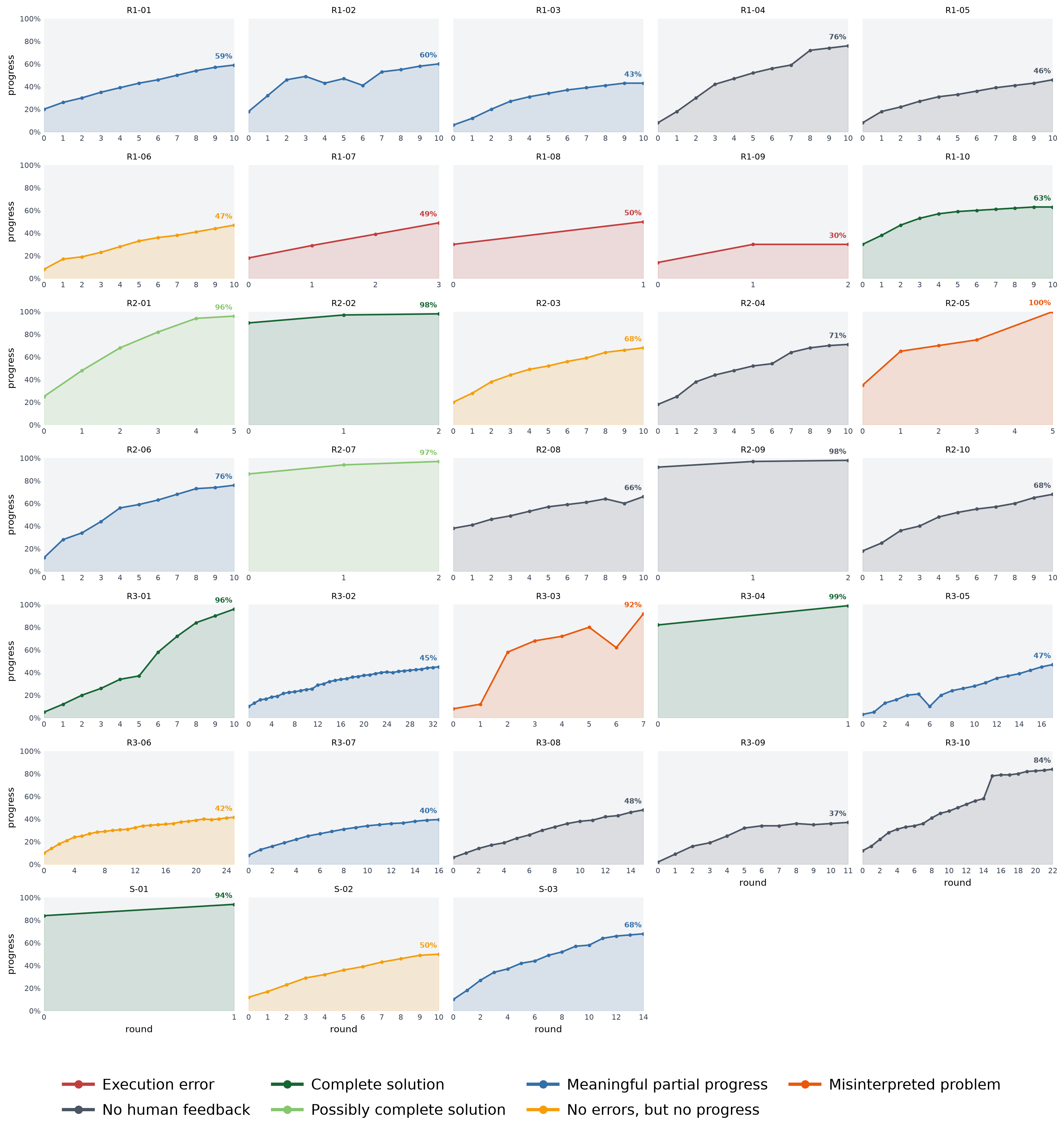}
\caption{Progress panel for all 33 attempted problems: rows R1--R3 show the three 10-problem runs, row S the three single-problem pretest runs. Curves show GPT-5.5-Pro postscreen progress estimates. Colors indicate final status, per human feedback where received; the feedback categories match \cref{tab:open-question-feedback}.}
\label{fig:progress-panel-all}
\end{figure}

To prepare our submission for the FirstProof challenge, we ran several preliminary evaluations: three single-problem pretest runs on researcher-submitted problems, followed by three larger runs of 10 problems each, the last of which also included the three Erd\H{o}s problems. In this section, we report the results of these runs and summarize the feedback we received from human mathematicians on the solutions produced by ProofCouncil.

\paragraph{Problem collection}
To collect problems, we contacted all members of the mathematics department at ETH Zurich and asked them to share open problems they were interested in seeing attempted by an AI system. Further problems were sent by researchers at a few other institutions via private communication. Each problem was screened using GPT-5.5-Pro to check whether it was self-contained and accurately stated. When necessary, edits were made in collaboration with the submitting researcher. The screening prompt is provided in \cref{app:prompts}.
Across all runs, we evaluated ProofCouncil on 33 distinct problems: 30 problems collected from researchers and 3 public Erdős problems.  In \cref{sec:erdos539-case-study}, we give a self-contained account of one Erdős-problem result from these runs.

\paragraph{Problem execution}
Each run used the same codebase, with minor prompt changes and bug fixes introduced between runs based on earlier results. Importantly, three problems in the first 10-problem run encountered execution errors. In the final 10-problem run, we increased the maximum number of iterations from 10 to 200, effectively allowing the system to run until the 24-hour time limit was reached.

\paragraph{Main results}

We requested feedback from problem authors on all produced outputs for the non-Erdős problems (shown in \cref{tab:open-question-feedback}). Among the 21 responses received so far, 5 outputs were judged to be complete solutions,\footnote{One of these problems was submitted by an author of this paper. To ensure independence, the solution was given to three external experts in the area, who judged the results new and the argument likely correct. The proof has since been substantially verified by the submitting author together with these experts, in the course of preparing a follow-up publication.} and 2 were considered promising but not yet fully verified by the researchers. Most of the remaining outputs were not complete, but still useful: 8 contained partial progress, while 4 contained no apparent errors but offered little substantive progress. The remaining 2 responses had misinterpreted the problem, producing correct solutions to easier versions of the stated problems. Importantly, among the 21 received reviews, no reviewer reported that an output claiming to solve the stated problem was mathematically false. Instead, the two misinterpretation cases highlight problem interpretation as a significant failure mode. The recorded cost was approximately \$213 per attempted problem. 

\paragraph{Qualitative human feedback}
Several researchers expressed surprise at the quality of the outputs, noting that solutions were useful, constrained the search space for future work, or took the right approach across all subquestions. In one notable case, ProofCouncil found a complete solution using an unexpected approach and went substantially further than the researcher had anticipated for the project.
At the same time, researchers also identified important limitations. Some outputs relied on an unorthodox interpretation that made the problem easier or nearly trivial. Others mainly explained why the problem was difficult without making substantial progress. In some cases, the mathematical content was promising, but the write-up did not read like a polished research paper.

\paragraph{Postscreen audit}
To further analyze the runs, we asked GPT-5.5-Pro to review the intermediate logs and assign a progress score to each problem at each round. The resulting progress panel is shown in \cref{fig:progress-panel-all}. The audit suggests that some problems were only marked as solved after several rounds of feedback, indicating that the iterative author-critic process was important in reaching the final solution. However, this analysis was solely based on the model's interpretation of the logs, and the results should be interpreted with caution.

The audit also revealed recurring failure modes. In particular, ProofCouncil often became stuck in a local minimum, repeatedly attempting to patch a specific issue in the proof without making meaningful progress toward a complete solution. One possible way to address this would be to run several author-critic threads in parallel, allowing the system to explore more diverse proof strategies, but this would substantially increase cost. Finally, we note that both auxiliary components were used extensively: the compute node and the LLM council each appeared in more than 50\% of rounds.

\subsection{Results on FirstProof}

\subsubsection{Overall results}

In the FirstProof challenge, expert referees appointed by the organizers graded the submissions to the 10 problems.\footnote{In the official FirstProof report \citep{abouzaid2026proofsecondbatch}, our submitted harness appears as System~A (IMProofBench ProofCouncil).} We refer to the problems as P1, \dots, P10. Based on the referee recommendations, submissions for 6 of the 10 problems (P1, P2, P3, P5, P7, and P9) were judged correct up to at most minor revisions. One further submission, P10, contained partial progress toward a solution but required major revisions. Of the remaining three problems (P4, P6, and P8), the submissions for P4 and P8 were rejected, while ProofCouncil produced no submission for P6 because of repeated timeout errors from the OpenAI API from which the system did not recover. We therefore exclude P6 from the remaining analysis, since it is not informative about ProofCouncil's performance.

\paragraph{Cost.}
The run logged approximately \$350 per problem in model-call costs.\footnote{The total logged cost was \$3{,}186 (\cref{fig:firstproof-component-costs}); we divide by the nine analyzed problems, since P6 produced no submission.} In \cref{fig:firstproof-component-costs}, we show the distribution of costs across the main ProofCouncil components. By comparison, the single-query GPT-5.5-Pro baseline, one of the four systems evaluated in the challenge, cost only about \$12 per problem, but solved 4 of the 9 analyzed problems rather than 6.  Thus, ProofCouncil improved the solve rate, but at substantially higher total cost. Since we did not optimize ProofCouncil for cost, it is likely that a more efficient configuration could achieve similar results at lower cost, which we leave for future work.

\subsubsection{Component utility}
We next discuss several notable observations about the non-author components from the run.

\paragraph{Critic.}
On almost all problems, the critic helped prevent incorrect solutions from being accepted. Verification by a fresh critic was especially important on P10, where the proposed solution contained partial progress but was judged incomplete by the referees: the stateful critic accepted intermediate versions of the solution at seven different iterations, whereas the fresh critic rejected all of them. This suggests that independent re-evaluation can be important when the critic has been exposed to other history.

There were two notable failure cases. First, on P8, a critic accepted an incorrect solution that was later rejected by the human referees because it relied on an unsupported technical assertion not proved by the cited references. Second, on P3, the critic rejected the solution at every stage, even though the referees judged the output correct up to minor revisions.

We further investigated the P8 false positive using a staged GPT-5.5-Pro diagnostic. Given only the ProofCouncil submission, the model partially identified the critical part of the proof as the weak point, but judged the solution as requiring only minor revisions. After being shown the human solution, it revised its assessment to major revisions and recognized that the missing component was substantial. After seeing the referee reports, it agreed with high confidence that the central assertion had not been proved and was not supported by the cited references.

\paragraph{Council.}
GPT-5.5-Pro was the strongest and most expensive council member, and contributed visibly useful ideas on P3, P5, and P10. Claude Opus also made a concrete contribution on P3 by suggesting a correct direction, and supplied useful comments on P5. Gemini was substantially cheaper and less decisive, but still provided useful route validation on P5 and pointed toward relevant analogies on P3.


\paragraph{Compute worker.}
The compute worker was used most extensively on P3, P4, P5, P8, and P10. Its main roles were literature search and direct verification of claims arising in evolving drafts. In several cases, it found counterexamples or failures of proposed intermediate assertions. This suggests that the compute worker was most useful when the author loop generated concrete claims that could be checked independently, rather than when broad strategic progress was needed.

\begin{figure}[t]
\centering
\begin{tikzpicture}[font=\small]
\definecolor{costauthor}{RGB}{68,119,170}
\definecolor{costcritic}{RGB}{221,132,82}
\definecolor{costgpt}{RGB}{85,168,104}
\definecolor{costclaude}{RGB}{196,78,82}
\definecolor{costgemini}{RGB}{129,114,179}
\definecolor{costcompute}{RGB}{128,133,133}
\def\costradius{1.65}
\filldraw[fill=costauthor,draw=white,line width=0.6pt] (0,0) -- (90:\costradius) arc[start angle=90,end angle=265.710,radius=\costradius] -- cycle;
\filldraw[fill=costcritic,draw=white,line width=0.6pt] (0,0) -- (265.710:\costradius) arc[start angle=265.710,end angle=379.638,radius=\costradius] -- cycle;
\filldraw[fill=costgpt,draw=white,line width=0.6pt] (0,0) -- (379.638:\costradius) arc[start angle=379.638,end angle=420.453,radius=\costradius] -- cycle;
\filldraw[fill=costclaude,draw=white,line width=0.6pt] (0,0) -- (420.453:\costradius) arc[start angle=420.453,end angle=433.502,radius=\costradius] -- cycle;
\filldraw[fill=costgemini,draw=white,line width=0.6pt] (0,0) -- (433.502:\costradius) arc[start angle=433.502,end angle=435.500,radius=\costradius] -- cycle;
\filldraw[fill=costcompute,draw=white,line width=0.6pt] (0,0) -- (435.500:\costradius) arc[start angle=435.500,end angle=450.000,radius=\costradius] -- cycle;
\node[anchor=west] at (2.25,1.45) {\textbf{Component spend (Total: \$3,186)}};
\fill[costauthor] (2.25,0.94) rectangle ++(0.18,0.18);
\node[anchor=west] at (2.52,1.03) {Author: \$1,555 (48.8\%)};
\fill[costcritic] (2.25,0.56) rectangle ++(0.18,0.18);
\node[anchor=west] at (2.52,0.65) {Critic, fresh and stateful: \$1,008 (31.6\%)};
\fill[costgpt] (2.25,0.18) rectangle ++(0.18,0.18);
\node[anchor=west] at (2.52,0.27) {Council: GPT-5.5-Pro: \$361 (11.3\%)};
\fill[costclaude] (2.25,-0.20) rectangle ++(0.18,0.18);
\node[anchor=west] at (2.52,-0.11) {Council: Claude Opus 4.7: \$115 (3.6\%)};
\fill[costgemini] (2.25,-0.58) rectangle ++(0.18,0.18);
\node[anchor=west] at (2.52,-0.49) {Council: Gemini 3.1 Pro Preview: \$18 (0.6\%)};
\fill[costcompute] (2.25,-0.96) rectangle ++(0.18,0.18);
\node[anchor=west] at (2.52,-0.87) {Compute worker: \$128 (4.0\%)};
\end{tikzpicture}
\caption{ProofCouncil model spend by component in the official FirstProof run.}
\label{fig:firstproof-component-costs}
\end{figure}
\section{Limitations and Future Work}
\label{sec:discussion}

We briefly discuss several limitations of our work and directions for future work.

\paragraph{Cost}
Running ProofCouncil is expensive, with recorded costs exceeding \$200 per attempted problem. This cost is primarily driven by the repeated use of frontier models for both proof generation and verification, as well as by calls to auxiliary agents such as the LLM council and compute node. Reducing this cost is an important direction for future work. Possible approaches include improving stopping criteria or routing simpler subtasks to cheaper models.

\paragraph{Problem interpretation and evaluation reliability}
Two outputs in our open-question evaluation solved easier interpretations of the stated problems. Since the critic shares the author's reading of the problem statement, such misinterpretations are unlikely to be caught within the system, and detecting them currently requires an expert who knows the intended meaning. More broadly, our results rest on human judgment rather than complete verification: the FirstProof referees graded submissions up to at most minor revisions, three of the 30 researcher problems produced no output because of execution errors, six of the remaining 27 outputs have not yet received feedback, and the progress analysis in \cref{fig:progress-panel-all} is model-generated. In addition, since feedback from earlier runs informed prompt adjustments and bug fixes, the researcher-problem results constitute an adaptive development evaluation rather than an evaluation of a single frozen configuration. The reported numbers should be read with these caveats in mind.

\paragraph{Human readability}
ProofCouncil was optimized primarily for finding correct solutions rather than for producing polished mathematical proofs. Several mathematicians noted that the resulting proofs were dense and would need to be expanded before they could be read by non-experts in the relevant subfield. Improving readability is therefore another important direction for future work. This could involve adding dedicated exposition agents that expand arguments, supply missing background, standardize notation, and rewrite final solutions into a form closer to a research paper.

\section{Conclusion}
\label{sec:conclusion}

In this work, we introduced ProofCouncil, an agentic system for solving open mathematical problems. ProofCouncil solved 6 of the 10 problems in the second batch of the FirstProof challenge up to at most minor revisions, showing the best performance among participating teams. On a separate set of open problems collected from mathematical researchers, ProofCouncil produced five complete solutions and meaningful partial progress in many other cases. To support further work, we release both ProofCouncil and the underlying open-source library for constructing agentic systems as conditional DAGs.
\section*{Use of AI}
We describe how AI was used in this project, including the models and tools involved. The code for ProofCouncil was generated with a combination of Claude Code and Codex, guided by human feedback on what should be generated, how ProofCouncil should be structured, and how the interface and generation process should be executed. GPT-5.5-Pro was used to produce the analysis shown in \cref{fig:progress-panel-all}. Its feedback helped us interpret the runs, identify failure modes, and make the necessary changes. In the paper itself, AI was used only to paraphrase paragraphs for clarity and assist with formatting, while all substantive content was written by humans. In addition, Claude Fable 5 (via Claude Code) produced a detailed referee-style review of the paper before submission, whose suggestions were triaged by the authors, and made some targeted edits following detailed author prompting. One exception is \cref{app:erdos539}, which is a cleaned-up example output from ProofCouncil.
\section*{Acknowledgments}

We thank the organizers of the FirstProof challenge for hosting the competition and providing a platform for evaluating mathematical problem-solving agents. We also thank the researchers at ETH Zurich and other institutions who shared their open problems with us and provided feedback on our solutions, and Jeremy Feusi for helpful discussions and input in the early phase of the project.

This work was supported by an unrestricted gift from Google Ireland Limited to ETH Zurich, supporting work related to ``Research on Evals'', administered through the ETH Zurich Foundation, and by the Swiss National Science Foundation grant 10009122, ``Beyond Benchmark Scores: Analyzing AI Reasoning on Research-Level Mathematics''. J.S. was also supported by SwissMAP. We also received free API credits for the Gemini API, enabling the evaluation of Gemini models; we thank the team at Google for this contribution.

\section*{Contributions}

\begin{itemize}
    \item Johannes Schmitt led the team, designed and implemented the overall system architecture, and served as the primary point of contact for the ETH math researchers and the organizers of the FirstProof challenge.
    \item Tim Gehrunger co-developed the agent-building library, helping Johannes implement the ProofCouncil system and set up the evaluation pipeline.
    \item Jasper Dekoninck helped design the user interface for the library, and served as the main writer of the paper.
    \item Gergely B\'erczi tested the agent, implemented small features in the library, and provided feedback on the paper.
    \item Uri Kreitner tested the agent and provided feedback on the paper.
    \item Liam Price tested the agent and provided feedback on the paper.
    \item David Holmes joined the project after the FirstProof second-batch submission and results, and contributed code to the released library, in particular support for human-in-the-loop workflows (\cref{sec:library}) together with improvements to the run dashboard, resume handling, and cost accounting.
\end{itemize}

\bibliographystyle{colm2026_conference}
\bibliography{references}

\appendix
\crefalias{section}{appendix}
\Crefname{appendix}{Appendix}{Appendices}

\section{Case Study: Erd\H{o}s Problem 539}
\label{sec:erdos539-case-study}
We present a use-case of ProofCouncil's problem-solving capabilities through a detailed account of its partial solution to Erd\H{o}s Problem~539, an open problem in number theory. The model makes meaningful progress on the problem. The proof was verified by human experts, and its main ingredients, including the exponent conclusion, are additionally formalized in Lean by Codex with GPT-5.5; \cref{app:erdos539-lean} describes the precise scope of the formalization.

Erd\H{o}s Problem~539 asks for estimates on the minimum possible size of the cofactor set
\[
Q(A)=\left\{ \frac{a}{\gcdop(a,b)} : a,b\in A\right\}
\]
where \(A\subseteq\Z_{>0}\) has size \(n\).  Equivalently, if
\[
h(n)=\min_{\substack{A\subseteq\Z_{>0}\\ |A|=n}} |Q(A)|,
\]
the problem asks for the growth of \(h(n)\).  The Erd\H{o}s Problems database \citep{erdos539} lists the problem as open; prior to the work reported here, it recorded the bounds
\[
n^{1/2}\ll h(n)\ll n^{2/3}.
\]
The upper bound \(n^{2/3}\) comes from a construction of Freiman--Lev, presented by Granville--Roesler through an equivalent positive-projection formulation; see also Holzman--Lev--Pinchasi for fixed-dimensional lower bounds~\citep{erdos539,granville_roesler_1999,holzman_lev_pinchasi_2008}.

In unpublished joint work, Bollob\'as and Leader previously announced a negative answer to the \(n^{2/3}\) question, that is, that the exponent \(2/3\) above is not optimal; see the paragraph on prior announcements below for the documentary record. We are not aware of a written account of this work and do not know which bound it established, and we therefore make no claim of priority over that announcement. ProofCouncil produced the following improvement on the published bounds. The Erd\H{o}s Problems database has since been updated to record this bound, while the exact asymptotic order of \(h(n)\) remains open.

\begin{theorem}[Erd\H{o}s 539, power exponent]
\label[theorem]{thm:erdos539-maintext}
There is an absolute constant \(C>0\) such that, for every \(n\ge2\),
\[
\frac{1+\sqrt{8n-7}}{2}\le h(n)\le
n^{1/2}\exp(C\sqrt{\log n}).
\]
Consequently,
\[
\lim_{n\to\infty}\frac{\log h(n)}{\log n}=\frac12.
\]
\end{theorem}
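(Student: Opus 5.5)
The statement has two independent halves, and I would prove them separately. The lower bound should come from a short injection argument. The upper bound needs an explicit construction, most naturally in the positive-projection (exponent-vector) formulation of Granville--Roesler mentioned above. The limit statement then follows at once: \(\log h(n)/\log n\) is squeezed between \(\tfrac12+o(1)\) from below and \(\tfrac12+C/\sqrt{\log n}\) from above.

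\emph{Lower bound.} Let \(A\) have size \(n\), let \(q=|Q(A)|\), and let \(m=\min A\).
\begin{itemize}
\item For each \(b\in A\setminus\{m\}\), put \(g=\gcdop(b,m)\) and associate the pair \(x_b=b/g\), \(y_b=m/g\). Both entries lie in \(Q(A)\).
\item The two entries are coprime, and \(x_b>y_b\) because \(b>m\). In particular \(x_b\neq y_b\), so the unordered set \(\{x_b,y_b\}\) is a \(2\)-element subset of \(Q(A)\), and its larger element is \(x_b\).
\item Since \(x_b/y_b=b/m\) and \(m\) is fixed, the unordered set \(\{x_b,y_b\}\) determines \(b\).
\end{itemize}
This gives an injection from \(A\setminus\{m\}\) into the \(2\)-subsets of \(Q(A)\), so \(n-1\le\binom{q}{2}\). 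Solving the quadratic \(q^2-q-2(n-1)\ge0\) gives exactly \(q\ge \frac{1+\sqrt{8n-7}}{2}\).

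\emph{Upper bound.} Pass to exponent vectors over the first \(d\) primes: \(a\mapsto u\in\Z_{\ge0}^d\). Then \(a/\gcdop(a,b)\) corresponds to the positive part \((u-v)^+\). The task becomes: find \(n\) vectors \(U\subset\Z_{\ge0}^d\) whose difference set has only \(n^{1/2+o(1)}\) distinct positive parts. The heuristic target is that \((u-v)^+\) should carry only ``half'' the information of the pair. For this I would take \(U\) to be lattice points of a box \(\{0,\dots,M\}^d\) lying on a common level set of the quadratic form \(\|x\|_2^2\), intersected with a fixed value of \(\|x\|_1\), in the spirit of Behrend's construction.

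For \(u,v\) on such a sphere-like surface, the positive and negative parts \(w=(u-v)^+\) and \(w'=(u-v)^-\) are strongly coupled. They have equal \(\ell^1\)-mass, disjoint supports, and satisfy a quadratic relation \(\langle u+v,\,w-w'\rangle=0\). The goal is to show that the admissible \(w\) are confined to a set of size about \(M^{d/2}\) times losses of size \(e^{O(d)}\cdot M^{O(1)}\). By pigeonhole, the popular level set has size \(n\approx M^{d}/(dM^2\cdot dM)\). Choosing \(d\asymp\sqrt{\log n}\) and \(\log M\asymp\sqrt{\log n}\) then balances the polynomial losses in \(M\) against the exponential-in-\(d\) losses. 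This should yield \(q\le n^{1/2}\exp(C\sqrt{\log n})\) along a sequence of \(n\). Finally, \(h\) is monotone up to constants: delete elements, and pad by at most a factor \(e^{O(\sqrt{\log n})}\) between consecutive admissible sizes. This gives the bound for every \(n\ge2\), after enlarging \(C\) to absorb small \(n\).

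\emph{Main obstacle.} The hard step is the counting lemma bounding the number of distinct positive parts by roughly the square root of \(|U|\). I expect the naive choice above not to work directly, since positive parts of generic differences still range over about \(M^d\) vectors. I would first try to restrict \(U\) further, so that every difference has positive part supported on a structured half of the coordinates. One way is to take a product of \(d/2\) copies of a low-dimensional Freiman--Lev-type gadget, each contributing a square-root saving up to a constant factor. The exponent-\(d\) accumulation of those constants is exactly what produces the \(\exp(C\sqrt{\log n})\) loss after optimizing \(d\).
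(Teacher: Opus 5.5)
Your lower bound is correct and takes a more elementary route than the paper. The injection $b\mapsto\{b/\gcdop(b,m),\,m/\gcdop(b,m)\}$ into $2$-subsets of $Q(A)$ gives $n-1\le\binom q2$ directly. The paper instead passes to exponent vectors and combines $F-F\subseteq D(F)-D(F)$ with $|F-F|\ge 2n-1$ to get $2n-1\le q(q-1)+1$, which is the same inequality.

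The genuine gap is the upper bound, which is the real content of the theorem. The sphere construction, as you suspect, has no counting argument behind it. Your fallback cannot work either. For Cartesian products $D(F_1\times F_2)=D(F_1)\times D(F_2)$; equivalently, $Q$ is multiplicative over sets supported on disjoint primes. So $|F|$ and $|D(F)|$ are both multiplicative, and the exponent $\log|D(F)|/\log|F|$ of a product is a weighted average of the exponents of its factors. A product of Freiman--Lev gadgets therefore stays at exponent $2/3$, and no product can beat its best factor. A family of gadgets with a square-root saving up to bounded constants would by itself give $h(n)\ll n^{1/2}$, which is stronger than the theorem. So the product step is circular: building the gadget is the whole problem.

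What is missing is a non-product composition that \emph{lowers} the exponent at each step. For that you must also control the ordinary difference set $F-F$, which your proposal never tracks. The paper uses the separated suspension
$\mathcal S_K(F)=\{(j,x+jM\one,y-jM\one):0\le j<K,\ x,y\in F\}$,
with $M$ larger than all coordinate differences in $F$; it has $Km^2$ points.
\begin{itemize}
\item Pairs on the same level contribute only $\{0\}\times D(F)\times D(F)$, so at most $q^2$ values.
\item For pairs on different levels, the shift forces one of the two blocks of the positive part to vanish. This leaves at most $r=|F-F|$ values per level gap, which is exactly your ``positive part carries half the information'' heuristic, realized across levels rather than on a sphere.
\end{itemize}
Start from the strip $B_W$, with $|B_W|\asymp W^3$, $|D(B_W)|\lesssim W^2$ and, crucially, $|B_W-B_W|\lesssim W^3$. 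Iterating with $K=W$ gives the exponent recursion $a_{s+1}=2a_s+1$, $b_{s+1}=2b_s$. Hence $h(n)\ll_s n^{2^{s+1}/(2^{s+2}-1)}$, with constants $\exp(O(2^s))$, in dimension $3\cdot2^s-1$. Only at that point does your final balancing go through: take dimension $\asymp\sqrt{\log n}$ against $e^{O(d)}$ constant losses, then use monotonicity of $h$ to reach every $n$.
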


The proof, given in \Cref{app:erdos539}, first translates the number-theoretic problem into the positive-orthant difference-set problem
\[
D(F)=(F-F)^+,
\]
then iterates a separated suspension construction starting from the classical two-dimensional antidiagonal strip.  For each fixed suspension depth this gives a fixed-dimensional construction, and optimizing the depth as a function of \(n\) yields the subexponential factor in \Cref{thm:erdos539-maintext}.  Thus the theorem determines the dimension-free power exponent in Erd\H{o}s Problem~539, while leaving the exact order open within the remaining subexponential factor.

\paragraph{Relation to prior announcements.}
The unpublished Bollob\'as--Leader work mentioned above is documented as follows.  Leader's 2009 Warwick abstract for \emph{Positive Projections} explicitly says that the talk included ``a negative answer to the \(n^{2/3}\) question'' and identifies the work as joint with B\'ela Bollob\'as~\citep{leader_positive_2009}.  A 2012 Oxford abstract gives the same Granville--Roesler positive-projection formulation and again refers to joint work with Bollob\'as~\citep{leader_positive_2012}.  We are not aware of any published written account of this work.  \Cref{app:erdos539} records a short self-contained proof of the \(n^{1/2+o(1)}\) upper bound, which strengthens the published \(n^{2/3}\) bound.

\subsection{Proof}
\label{app:erdos539}

This appendix gives a cleaned-up version of the proof of \Cref{thm:erdos539-maintext}, the Erd\H{o}s~539 result found by ProofCouncil.  The accompanying Lean~4 project in \texttt{lean/} formalizes the main ingredients of the proof; \cref{app:erdos539-lean} describes its precise scope.

\subsubsection{Definitions and the positive-projection reformulation}

For \(x=(x_1,\ldots,x_d)\in\Z^d\), put
\[
x^+=(\max(x_1,0),\ldots,\max(x_d,0)).
\]
If \(F\subseteq\Z^d\) is finite, define
\[
D(F)=(F-F)^+=\{(x-y)^+:x,y\in F\}.
\]
For a finite set \(A\subseteq\Z_{>0}\), put
\[
Q(A)=\{a/\gcdop(a,b):a,b\in A\},
\qquad
h(n)=\min_{\substack{A\subseteq\Z_{>0}\\ |A|=n}} |Q(A)|.
\]
It is useful to pass freely between this formulation and the vector formulation
\[
H(n)=\min_{\substack{d\ge1,\ F\subseteq\Z^d\\ |F|=n}} |D(F)|.
\]

\begin{lemma}[Equivalence of formulations]
\label[lemma]{lem:erdos539-equivalence}
For every \(n\ge1\), one has \(h(n)=H(n)\).
\end{lemma}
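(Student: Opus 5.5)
The plan is to prove the two inequalities $h(n)\ge H(n)$ and $h(n)\le H(n)$ by passing through prime factorizations, and by translating integer vectors to make them nonnegative.

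\emph{Step 1: $H(n)\le h(n)$.} Let $A\subseteq\Z_{>0}$ with $|A|=n$. Enumerate the primes $p_1,\ldots,p_d$ dividing some element of $A$; if there are none, then $A=\{1\}$, $n=1$, and we take $d=1$ with the zero vector. Map each $a$ to its exponent vector $v(a)=(v_{p_1}(a),\ldots,v_{p_d}(a))\in\Z^d_{\ge0}$. This map is injective by unique factorization, so $F=v(A)$ has $|F|=n$. The key identity is
\[
v\bigl(a/\gcdop(a,b)\bigr)=v(a)-\min(v(a),v(b))=(v(a)-v(b))^+,
\]
taken coordinatewise. The cofactor $a/\gcdop(a,b)$ has all its prime factors among $p_1,\ldots,p_d$, and $v$ is injective on such integers. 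Hence the map $Q(A)\to D(F)$, $q\mapsto v(q)$, is a bijection, which gives $|D(F)|=|Q(A)|$.

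\emph{Step 2: $h(n)\le H(n)$.} Let $F\subseteq\Z^d$ with $|F|=n$. Translation does not change differences, so $D(F+t)=D(F)$ for every $t\in\Z^d$. We may therefore translate $F$ into $\Z^d_{\ge0}$. Now set $A=\{\prod_i p_i^{x_i}: x\in F\}$ with $p_i$ distinct primes. This map is injective, so $|A|=n$, and by the same identity as in Step 1, $|Q(A)|=|D(F)|$.

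The only point needing care is that both minima are attained over nonempty families, so the inequalities pass to the minima; this is immediate because the values are positive integers. I expect no real obstacle. The mildest subtlety is the degenerate case $A=\{1\}$, or $d=0$, which is handled by allowing a zero coordinate since the definition of $H$ requires $d\ge1$.
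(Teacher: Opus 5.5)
Your proof is correct and follows the paper's argument essentially step for step. Exponent vectors with the identity $v(a/\gcdop(a,b))=(v(a)-v(b))^+$ give $H(n)\le h(n)$, and translating $F$ into $\Z_{\ge0}^d$ and encoding it via distinct primes gives $h(n)\le H(n)$. The differences are cosmetic: the paper disposes of $n=1$ separately, you absorb the case $A=\{1\}$ into Step~1, and you make explicit the bijection $Q(A)\to D(F)$ that the paper leaves implicit.
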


\begin{proof}
For \(n=1\) both quantities equal \(1\), witnessed by \(A=\{1\}\) and \(F=\{0\}\subseteq\Z\); so assume \(n\ge2\), which ensures that the set \(P\) below is nonempty.

Let \(A\subseteq\Z_{>0}\) be finite with \(|A|=n\), and let \(P\) be the finite set of primes dividing at least one element of \(A\).  The exponent vector \(v(a)=(v_p(a))_{p\in P}\) is injective on \(A\), and unique factorization gives
\[
v(a/\gcdop(a,b))=(v(a)-v(b))^+.
\]
Hence \(|Q(A)|=|D(v(A))|\), so \(H(n)\le h(n)\).

Conversely, let \(F\subseteq\Z^d\) be finite.  Translate \(F\) by some \(t\in\Z^d\) so that \(F+t\subseteq\Z_{\ge0}^d\); this does not change \(D(F)\).  Choose distinct primes \(p_1,\ldots,p_d\), and set
\[
a_x=\prod_i p_i^{x_i+t_i}\qquad (x\in F).
\]
Again by unique factorization, the exponent vectors of the elements of \(Q(\{a_x:x\in F\})\) are exactly the elements of \(D(F)\).  Thus \(h(n)\le H(n)\).
\end{proof}

\subsubsection{The universal square-root lower bound}

\begin{lemma}
\label[lemma]{lem:erdos539-ordinary-diff-lower}
If \(F\subseteq\Z^d\) and \(|F|=n\), then \(|F-F|\ge2n-1\).
\end{lemma}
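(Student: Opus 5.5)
Fix a total order on \(\Z^d\) compatible with addition, and use it to produce \(2n-1\) distinct differences. The lexicographic order works: \(x<y\) iff at the first coordinate where they differ, \(x\) has the smaller entry. It is a strict total order. It is also translation invariant: \(x<y\) implies \(x+z<y+z\) for every \(z\in\Z^d\). Equivalently, one may use a linear functional \(\lambda(x)=\sum_i M^{i}x_i\) with \(M\) large enough that \(\lambda\) is injective on the finite set \(F\). This reduces the problem to \(\Z\), where the bound is classical.

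Next, list the elements of \(F\) in increasing order as \(f_1<f_2<\cdots<f_n\). Consider the \(2n-1\) differences
\[
f_1-f_n<f_2-f_n<\cdots<f_n-f_n=0<f_n-f_{n-1}<\cdots<f_n-f_1 .
\]
The strict inequalities in the first half follow from \(f_i<f_{i+1}\) by translating by \(-f_n\). Those in the second half follow from \(f_{n-i}<f_{n-i+1}\) by negating, which reverses the order in a translation-invariant total order, and then translating by \(f_n\). The two halves are separated by \(0\) because \(f_i-f_n<0\) for \(i<n\) and \(f_n-f_j>0\) for \(j<n\). All \(2n-1\) listed elements lie in \(F-F\) and are pairwise distinct, so \(|F-F|\ge 2n-1\).

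The only point needing care is checking that the lexicographic order really is translation invariant and that negation reverses it. Both are immediate from the definition, since adding \(z\) or negating does not change the first coordinate where two vectors differ. If one prefers the functional \(\lambda\), the required care is instead in choosing \(M\) larger than twice the maximum absolute coordinate of points of \(F-F\), so that \(\lambda\) is injective on \(F\). Then \(|F-F|\ge|\lambda(F)-\lambda(F)|\ge 2n-1\) by the one-dimensional argument above.
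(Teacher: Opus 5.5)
Your proof is correct and is essentially the paper's argument: the paper picks an integer linear map $L:\Z^d\to\Z$ injective on $F$, exhibits the same chain $s_1-s_n<\cdots<0<\cdots<s_n-s_1$ in $L(F)-L(F)$, and concludes via $|L(F-F)|\le|F-F|$. Working directly with the translation-invariant lexicographic order on $\Z^d$ is the same idea without first projecting to $\Z$, and your explicit $M$-adic functional $\lambda$ simply makes concrete the paper's choice of $L$ ``outside finitely many proper rational hyperplanes.''
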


\begin{proof}
Choose an integer linear map \(L:\Z^d\to\Z\) that is injective on \(F\); for example, outside finitely many proper rational hyperplanes the coefficients of \(L\) work.  Then \(L(F)\) is an \(n\)-element subset \(\{s_1<\cdots<s_n\}\) of \(\Z\), and \(L(F)-L(F)\) contains the distinct elements
\[
s_1-s_n,\ldots,s_n-s_n,s_n-s_{n-1},\ldots,s_n-s_1.
\]
Since \(|L(F-F)|\le |F-F|\), the claim follows.
\end{proof}

\begin{proposition}
\label[proposition]{prop:erdos539-lower}
For every \(n\ge1\),
\[
h(n)\ge \frac{1+\sqrt{8n-7}}2.
\]
\end{proposition}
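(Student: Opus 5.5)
By \Cref{lem:erdos539-equivalence} it suffices to show \(|D(F)|\ge m\) whenever \(F\subseteq\Z^d\) has \(|F|=n\), where \(m\) is the positive root of \(m^2-m-2(n-1)=0\). Indeed, solving \(m^2-m-2(n-1)=0\) gives \(m=\frac{1+\sqrt{8n-7}}{2}\). So the target inequality is equivalent to
\[
|D(F)|\,(|D(F)|-1)\ge 2(n-1),
\]
and we will prove it in this form. The idea is to compare \(F-F\) with \(D(F)\) by splitting each difference \(z=x-y\) into its positive and negative parts.

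The key identity is \(z=z^+-(-z)^+\), and \(z^+,(-z)^+\) have disjoint supports. For \(x,y\in F\) we have \(z^+=(x-y)^+\in D(F)\) and \((-z)^+=(y-x)^+\in D(F)\). Hence the map \(z\mapsto (z^+,(-z)^+)\) is injective from \(F-F\) into the set of ordered pairs \((u,v)\in D(F)^2\) with disjoint supports. Now count these pairs.

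\begin{itemize}
\item The zero vector lies in \(D(F)\), coming from \(x=y\).
\item The pair \((0,0)\) accounts for \(z=0\).
\item Pairs \((u,0)\) and \((0,u)\) with \(u\ne0\) contribute at most \(2(|D(F)|-1)\).
\item For \(u,v\) both nonzero, \(u=v\) is impossible because the supports are disjoint. So the ordered pairs of distinct nonzero elements contribute at most \((|D(F)|-1)(|D(F)|-2)\).
\end{itemize}

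Adding up, with \(k=|D(F)|\),
\[
|F-F|\le 1+2(k-1)+(k-1)(k-2)=k^2-k+1.
\]

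Combining with \Cref{lem:erdos539-ordinary-diff-lower}, which gives \(|F-F|\ge 2n-1\), we get \(k^2-k+1\ge 2n-1\), that is, \(k^2-k\ge 2(n-1)\). This is exactly the target inequality, so \(h(n)=H(n)\ge \frac{1+\sqrt{8n-7}}{2}\).

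The main point needing care is the count of ordered pairs, specifically that a nonzero \(u\) can never pair with itself. This is what produces \(k^2-k+1\) rather than \(k^2\) and hence the exact constant \(\frac{1+\sqrt{8n-7}}{2}\). The case \(n=1\) is immediate.
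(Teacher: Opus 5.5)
Your proof is correct and is essentially the paper's argument. Both use $z=z^+-(-z)^+$ to place $F-F$ inside the set of differences of elements of $D(F)$, and both bound that set by $k^2-k+1$. The paper gets this count by noting that all diagonal differences $u-u$ coincide, while you note that disjoint supports exclude the pairs $(u,u)$ with $u\neq 0$. Both then combine the count with $|F-F|\ge 2n-1$ and the equivalence $h(n)=H(n)$.
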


\begin{proof}
Let \(F\subseteq\Z^d\) have \(|F|=n\), and put \(q=|D(F)|\).  Since
\[
x-y=(x-y)^+-(y-x)^+,
\]
we have \(F-F\subseteq D(F)-D(F)\).  A \(q\)-element set has at most \(q(q-1)+1\) ordinary differences, so \Cref{lem:erdos539-ordinary-diff-lower} gives \(2n-1\le q(q-1)+1\), which is equivalent to the displayed lower bound.  Now use \Cref{lem:erdos539-equivalence}.
\end{proof}

\subsubsection{The base strip}

\begin{lemma}[The base strip]
\label[lemma]{lem:erdos539-strip}
For every integer \(W\ge1\), there is a set \(B_W\subseteq\Z^2\) such that
\[
|B_W|\ge \frac12 W^3,
\qquad
|D(B_W)|\le 3W^2,
\qquad
|B_W-B_W|\le 6W^3.
\]
\end{lemma}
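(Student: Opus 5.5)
The plan is to take $B_W$ to be a truncated antidiagonal strip in the nonnegative quadrant, with $W$ consecutive antidiagonals at height about $W^2/2$. Concretely, set $N=\lceil W^2/2\rceil$ and
\[
B_W=\{(x,y)\in\Z_{\ge0}^2:\ N\le x+y\le N+W-1\}.
\]
All three bounds then come from elementary counting, controlled by two parameters. The first is the strip width $W$. The second is the maximal coordinate $M=N+W-1$.

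\emph{Size.} The antidiagonal $x+y=s$ contains $s+1$ points of $\Z_{\ge0}^2$. Each of the $W$ values $s\in[N,N+W-1]$ gives at least $N+1\ge W^2/2$ points. Hence $|B_W|\ge W\cdot W^2/2=\tfrac12W^3$.

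\emph{Positive-difference set.} Let $(a,b)=x-y$ with $x,y\in B_W$. Then $|a+b|\le W-1$ and $|a|,|b|\le M$. I split by the signs of $a$ and $b$.
\begin{itemize}
\item If $a,b\ge0$, then $(a,b)^+=(a,b)$ satisfies $a+b\le W-1$. There are at most $W(W+1)/2$ such pairs.
\item If exactly one of $a,b$ is negative, then $(a,b)^+$ lies on a coordinate axis. It has the form $(a,0)$ or $(0,b)$ with entry at most $M$.
\item If both are negative, then $(a,b)^+=(0,0)$.
\end{itemize}
The axis points with entry at most $W-1$ already lie in the triangle counted in the first case. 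So the only new elements are the $2(M-W+1)=2N$ axis points with entry between $W$ and $M$. This gives
\[
|D(B_W)|\le \frac{W(W+1)}2+2N\le \frac{W^2+W}{2}+W^2+1=\tfrac32W^2+\tfrac12W+1,
\]
which is at most $3W^2$ for every $W\ge1$. The case $W=1$ is tight: $3\le3$.

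\emph{Ordinary difference set.} The difference set satisfies $B_W-B_W\subseteq\{(a,b):|a|\le M,\ |a+b|\le W-1\}$. For each fixed $a$, the admissible $b$ form an interval of length $2W-1$. This yields
\[
|B_W-B_W|\le(2M+1)(2W-1).
\]
Next, $2M+1\le W^2+1+2W-1=W^2+2W$. So the product is at most $(W^2+2W)(2W-1)\le 2W^3+3W^2\le 6W^3$ for $W\ge1$.

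The only delicate point is balancing the height $N$. It must be large enough that $W$ antidiagonals contain $\tfrac12W^3$ points. It must also be small enough that the two long axis segments of $D(B_W)$, of total length about $2N$, stay within the $3W^2$ budget. The choice $N=\lceil W^2/2\rceil$ achieves both. I would double-check the small cases, especially $W=1$, where the rounding in $N$ matters.
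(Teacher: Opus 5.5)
Your proof is correct and is essentially the paper's argument. Both use an antidiagonal strip of width $W$ at height $\Theta(W^2)$. Positive differences are split into a small triangle plus two axis segments, and ordinary differences are bounded by (range of the first coordinate)$\times(2W-1)$. The only change is inessential: the paper takes height $L=W^2$ and also truncates to the box $0\le i,j\le L$, whereas you take height $\lceil W^2/2\rceil$ with no box.
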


\begin{proof}
Let \(L=W^2\) and
\[
B_W=\{(i,j)\in\Z_{\ge0}^2:0\le i,j\le L,\ L\le i+j\le L+W-1\}.
\]
For each \(0\le t<W\), the line \(i+j=L+t\) contributes \(L-t+1\) points, whence
\[
|B_W|=\sum_{t=0}^{W-1}(L-t+1)\ge W^3/2.
\]

Let \(u=(x-y)^+\) with \(x,y\in B_W\).  If both coordinates of \(u\) are positive, then
\[
u_1+u_2=(x_1+x_2)-(y_1+y_2)\le W-1.
\]
Otherwise \(u\) lies on a coordinate axis and its nonzero coordinate is at most \(L\).  Hence \(D(B_W)\) is contained in the union of \(\{(0,0)\}\), the two axis segments of length \(L\), and the positive integer pairs \((r,s)\) with \(r+s\le W-1\).  Therefore
\[
|D(B_W)|\le2W^2+1+(W-1)(W-2)/2\le3W^2.
\]
Finally, if \(z=x-y\in B_W-B_W\), then \(-W^2\le z_1\le W^2\) and \(|z_1+z_2|\le W-1\), so
\[
|B_W-B_W|\le(2W^2+1)(2W-1)\le6W^3.
\]
\end{proof}

\subsubsection{Separated suspension}

\begin{lemma}[Separated suspension]
\label[lemma]{lem:erdos539-suspension}
Let \(F\subseteq\Z^d\) be nonempty and finite, and write \(m=|F|\), \(q=|D(F)|\), and \(r=|F-F|\).  For every integer \(K\ge1\), there is a set \(\mathcal S_K(F)\subseteq\Z^{2d+1}\) such that
\[
|\mathcal S_K(F)|=Km^2,
\qquad
|D(\mathcal S_K(F))|\le q^2+2(K-1)r,
\qquad
|\mathcal S_K(F)-\mathcal S_K(F)|=(2K-1)r^2.
\]
\end{lemma}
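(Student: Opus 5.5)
The plan is to build $\mathcal S_K(F)$ as a product $F\times F\times\{0,\dots,K-1\}$, with the two copies of $F$ pushed apart in opposite directions according to the layer index. The push must be large enough that, across distinct layers, one block of coordinates of a difference is entirely positive and the other entirely negative. Concretely, let $M$ be an integer with $M>\max\{|z_i| : z\in F-F,\ 1\le i\le d\}$, and let $\one\in\Z^d$ be the all-ones vector. Define
\[
\mathcal S_K(F)=\{(x+tM\one,\ y-tM\one,\ t): x,y\in F,\ 0\le t\le K-1\}\subseteq\Z^{d}\times\Z^d\times\Z=\Z^{2d+1}.
\]

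\textbf{Cardinality.} The map $(x,y,t)\mapsto$ point is injective: $t$ is read off the last coordinate, and then $x$ and $y$ are recovered. Hence $|\mathcal S_K(F)|=Km^2$.

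\textbf{Difference set.} A difference of two points, indexed by $(x,y,t)$ and $(x',y',t')$, equals
\[
(a+sM\one,\ b-sM\one,\ s),\qquad a=x-x',\ b=y-y',\ s=t-t'.
\]
Given $s$ (read off the last coordinate), this vector determines $(a,b,s)$ and conversely. Every triple with $a,b\in F-F$ and $s\in\{-(K-1),\dots,K-1\}$ occurs, since $(x,x')$, $(y,y')$ and $(t,t')$ can be chosen independently. This gives exactly $(2K-1)r^2$ differences.

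\textbf{Positive parts.} I split according to $s$.
\begin{itemize}
\item If $s=0$, the positive part is $(a^+,b^+,0)$. There are at most $q\cdot q=q^2$ such vectors, since $a^+$ and $b^+$ each range over $D(F)$.
\item If $s\ge1$, every coordinate of $a+sM\one$ is at least $M-\max|a_i|>0$, and every coordinate of $b-sM\one$ is negative. So the positive part is $(a+sM\one,\,0,\,s)$, which is determined by the pair $(a,s)$. For each of the $K-1$ values of $s$ there are at most $r$ choices of $a$.
\item If $s\le-1$, symmetrically the positive part is $(0,\,b+|s|M\one,\,0)$, determined by $(b,s)$, giving at most $r$ choices for each of the $K-1$ values of $s$.
\end{itemize}
Summing the three cases gives $|D(\mathcal S_K(F))|\le q^2+2(K-1)r$.

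The only delicate point is the separation: the choice of $M$ exceeding the coordinate spread of $F$. This is exactly what forces the cross-layer positive parts to collapse onto a single block, so that they cost only $r$ rather than $r^2$ (or $q^2$) per layer shift. Everything else is bookkeeping of the bijection $(a,b,s)\leftrightarrow$ difference.
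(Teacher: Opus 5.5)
Your proof is correct and follows the paper's argument: the same construction, with the layer coordinate placed last instead of first, the same choice of $M$ exceeding the coordinate spread of $F-F$, and the same three-way case split on the level gap $s$. Your explicit bijection $(a,b,s)\mapsto(a+sM\one,\,b-sM\one,\,s)$ onto the difference set states the exact count $(2K-1)r^2$ a bit more transparently than the paper does, but the mathematical content is identical.
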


\begin{proof}
Choose an integer
\[
M>\max\{|x_i-x_i'|:x,x'\in F,\ 1\le i\le d\},
\]
and let \(\one=(1,\ldots,1)\in\Z^d\).  Put
\[
\mathcal S_K(F)=\{(j,x+jM\one,y-jM\one):0\le j<K,\ x,y\in F\}.
\]
The size is \(Km^2\).  Compare two points indexed by \((j,x,y)\) and \((\ell,x',y')\).  If \(j=\ell\), their positive difference is
\[
(0,(x-x')^+,(y-y')^+),
\]
giving at most \(q^2\) values.  If \(j>\ell\) and \(a=j-\ell\), then each coordinate of \(x-x'+aM\one\) is strictly positive and each coordinate of \(y-y'-aM\one\) is strictly negative.  Thus the positive difference is
\[
(a,x-x'+aM\one,0),
\]
and for each \(a\) there are at most \(r\) choices.  The case \(j<\ell\) similarly gives
\[
(0,0,y-y'+(\ell-j)M\one),
\]
again at most \(r\) choices for each level gap.  This proves the asserted bound for \(D(\mathcal S_K(F))\).

For ordinary differences, the same pair gives
\[
(j-\ell,x-x'+(j-\ell)M\one,y-y'-(j-\ell)M\one).
\]
The first coordinate determines the level gap, and for each of the \(2K-1\) possible gaps the two ordinary differences \(x-x'\) and \(y-y'\) range independently over \(F-F\).  Hence the ordinary difference set has exactly \((2K-1)r^2\) elements.
\end{proof}

\subsubsection{Iteration and optimization}

Set \(a_s=2^{s+2}-1\) and \(b_s=2^{s+1}\) for \(s\ge0\).

\begin{proposition}[Iterated constructions]
\label[proposition]{prop:erdos539-iteration}
For every \(s\ge0\), there are constants \(c_s,C_s,T_s>0\) and, for each integer \(W\ge1\), a set \(F_s(W)\subseteq\Z^{3\cdot2^s-1}\) satisfying
\[
|F_s(W)|\ge c_sW^{a_s},
\qquad
|D(F_s(W))|\le C_sW^{b_s},
\qquad
|F_s(W)-F_s(W)|\le T_sW^{a_s}.
\]
Moreover \(\log(1/c_s)+\log C_s+\log T_s=O(2^s)\), with an absolute implied constant.
\end{proposition}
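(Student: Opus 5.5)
We argue by induction on \(s\). The base case \(s=0\) is \Cref{lem:erdos539-strip}: \(F_0(W)=B_W\subseteq\Z^2=\Z^{3\cdot 2^0-1}\), with \(a_0=3\), \(b_0=2\), and constants \(c_0=1/2\), \(C_0=3\), \(T_0=6\).

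For the inductive step, given \(F_s(W)\subseteq\Z^{d_s}\) with \(d_s=3\cdot2^s-1\), we set
\[
F_{s+1}(W)=\mathcal S_K(F_s(W)),\qquad K=W^{2^{s+1}},
\]
which lies in \(\Z^{2d_s+1}\). This is the right dimension, since \(2(3\cdot 2^s-1)+1=3\cdot 2^{s+1}-1\).

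We then read off the three quantities from \Cref{lem:erdos539-suspension}, writing \(m=|F_s(W)|\), \(q=|D(F_s(W))|\) and \(r=|F_s(W)-F_s(W)|\).

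\emph{Size.} We have \(|F_{s+1}|=Km^2\ge c_s^2W^{2^{s+1}+2a_s}\). Since \(2a_s+2^{s+1}=2^{s+3}-2+2^{s+1}\), this exponent exceeds \(a_{s+1}=2^{s+3}-1\). That is harmless but wasteful, so the choice of \(K\) should be re-tuned to match the exponents exactly.

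\emph{Positive differences.} We need \(q^2\le C_s^2W^{2b_s}=C_s^2W^{2^{s+2}}\), which is of the form \(C\,W^{b_{s+1}}\). The other term is \(2Kr\le 2T_sW^{a_s}K\), and matching it to \(W^{b_{s+1}}\) forces \(K\approx W^{b_{s+1}-a_s}=W^{1}\). So the correct choice is \(K=W\), not \(W^{2^{s+1}}\).

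With \(K=W\) everything closes up:
\begin{itemize}
\item size: \(Km^2\ge c_s^2W^{2a_s+1}=c_s^2W^{a_{s+1}}\), using \(2(2^{s+2}-1)+1=2^{s+3}-1\);
\item positive differences: \(|D|\le C_s^2W^{b_{s+1}}+2T_sW^{a_s+1}\), where \(a_s+1=2^{s+2}=b_{s+1}\);
\item ordinary differences: \(|\mathcal S-\mathcal S|=(2W-1)r^2\le 2T_s^2W^{2a_s+1}=2T_s^2W^{a_{s+1}}\).
\end{itemize}
This gives the recursion
\[
c_{s+1}=c_s^2,\qquad C_{s+1}=C_s^2+2T_s,\qquad T_{s+1}=2T_s^2.
\]

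For the bookkeeping claim, put \(\lambda_s=\log(1/c_s)+\log C_s+\log T_s\). We have \(\log(1/c_{s+1})=2\log(1/c_s)\) and \(\log T_{s+1}=2\log T_s+\log2\). For \(C_{s+1}\), note \(C_s,T_s\ge1\), so \(C_s^2+2T_s\le 3\max(C_s,T_s)^2\), giving \(\log C_{s+1}\le 2\max(\log C_s,\log T_s)+\log 3\). Hence \(\lambda_{s+1}\le 2\lambda_s+O(1)\). Unrolling this recursion gives \(\lambda_s\le 2^s(\lambda_0+O(1))=O(2^s)\).

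The one point needing care is that \(\max(\log C_s,\log T_s)\le\lambda_s\). This requires all three logarithms to be nonnegative, i.e. \(c_s\le1\) and \(C_s,T_s\ge1\). These hold at \(s=0\) and are preserved by the recursion.

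The main obstacle is choosing \(K\) so that the exponents match exactly: \(K=W\) balances the \(q^2\) term against the \(Kr\) term. Beyond that, the argument is routine verification.
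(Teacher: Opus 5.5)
Your construction and recursion match the paper's exactly: $F_{s+1}(W)=\mathcal S_W(F_s(W))$, i.e.\ $K=W$, giving $c_{s+1}=c_s^2$, $C_{s+1}=C_s^2+2T_s$, $T_{s+1}=2T_s^2$. In a write-up, drop the false start with $K=W^{2^{s+1}}$ and state $K=W$ directly.

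The problem is the final bookkeeping. The step ``Hence $\lambda_{s+1}\le 2\lambda_s+O(1)$'' does not follow from your three bounds. Adding them gives
\[
\lambda_{s+1}\le 2\log(1/c_s)+2\log T_s+2\max(\log C_s,\log T_s)+\log 6 ,
\]
so $\log T_s$ is counted twice: once via $T_{s+1}$, and once via the max in your bound for $C_{s+1}$.

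In this recursion $T_s>C_s$ for all $s$. By induction, $C_s<T_s$ gives $C_{s+1}<T_s^2+2T_s\le 2T_s^2=T_{s+1}$, since $T_s\ge 2$. So the max is always $\log T_s$, and the excess over $2\lambda_s$ is $2(\log T_s-\log C_s)+\log 6$. This is unbounded; in fact $\log T_s-\log C_s\ge 2^s\log(12/7)$. Bounding the max by $\lambda_s$, as your last paragraph suggests, yields only $\lambda_{s+1}\le 4\lambda_s+O(1)$, i.e.\ $O(4^s)$. Inserting the exact value $\log T_s=2^s\log 12-\log 2$ yields only $O(s\,2^s)$. Neither is the claimed $O(2^s)$.

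The fix is to track the maximum instead of the sum, which is what the paper does with $M_s=\max\{2,1/c_s,C_s,T_s\}$ and $M_{s+1}\le 3M_s^2$. Put $\mu_s=\log M_s$. Each of your three updates is at most $2\mu_s+\log 3$, so $\mu_{s+1}\le 2\mu_s+\log 3$. Hence $\mu_s+\log 3\le 2^s(\mu_0+\log 3)$, and $\lambda_s\le 3\mu_s=O(2^s)$.

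If you prefer to keep the sum, you need a sharper bound on $C_{s+1}$. The invariant $T_s\le\tfrac23 C_s^2$ holds at $s=0$. It is preserved because $T_{s+1}/C_{s+1}^2=2x^2/(1+2x)^2\le 8/49$ with $x=T_s/C_s^2\le\tfrac23$. It then gives $\log C_{s+1}\le 2\log C_s+\log(7/3)$, after which $\lambda_{s+1}\le 2\lambda_s+O(1)$ genuinely holds. The rest of your argument is fine.
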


\begin{proof}
For \(s=0\), take \(F_0(W)=B_W\), where \(B_W\) is the strip from \Cref{lem:erdos539-strip}; this gives \(a_0=3\), \(b_0=2\), and one may take \(c_0=1/2\), \(C_0=3\), \(T_0=6\).

Assume \(F_s(W)\) has been constructed and define \(F_{s+1}(W)=\mathcal S_W(F_s(W))\).  The dimension changes from \(d\) to \(2d+1\), hence from \(3\cdot2^s-1\) to \(3\cdot2^{s+1}-1\).  \Cref{lem:erdos539-suspension} gives
\[
|F_{s+1}(W)|\ge c_s^2W^{2a_s+1},
\]
\[
|D(F_{s+1}(W))|\le (C_s^2+2T_s)W^{2b_s},
\]
and
\[
|F_{s+1}(W)-F_{s+1}(W)|\le2T_s^2W^{2a_s+1}.
\]
Since \(2a_s+1=a_{s+1}\), \(2b_s=b_{s+1}\), and \(a_s+1=2b_s\), this is the desired induction with
\[
c_{s+1}=c_s^2,
\qquad
C_{s+1}=C_s^2+2T_s,
\qquad
T_{s+1}=2T_s^2.
\]
If \(M_s=\max\{2,1/c_s,C_s,T_s\}\), then \(M_{s+1}\le3M_s^2\), so \(\log M_s=O(2^s)\).
\end{proof}

\begin{proposition}[Fixed-dimensional exponents]
\label[proposition]{prop:erdos539-fixed-dim}
For every integer \(s\ge0\),
\[
h(n)\ll_s n^{\alpha_s},
\qquad
\alpha_s=\frac{b_s}{a_s}=\frac{2^{s+1}}{2^{s+2}-1}
=\frac12+\frac{1}{2(2^{s+2}-1)}.
\]
Equivalently, in dimension \(d_s=3\cdot2^s-1\), the construction gives exponent
\[
\alpha_s=\frac{2(d_s+1)}{4d_s+1}.
\]
\end{proposition}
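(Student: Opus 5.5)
We derive the bound from \Cref{prop:erdos539-iteration} at fixed \(s\), combined with the monotonicity of \(h\) and the equivalence \Cref{lem:erdos539-equivalence}. The sets \(F_s(W)\) have sizes only roughly \(W^{a_s}\), so we must interpolate to every \(n\). The first step is monotonicity: \(H(n)\le H(N)\) whenever \(n\le N\). Given \(F\) with \(|F|=N\), any subset \(F'\subseteq F\) with \(|F'|=n\) satisfies \(D(F')\subseteq D(F)\). Hence \(H(n)\le|D(F')|\le|D(F)|\), and taking the minimum over \(F\) gives \(H(n)\le H(N)\). By \Cref{lem:erdos539-equivalence} the same holds for \(h\).

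Next, fix \(s\) and let \(n\ge1\). Choose \(W\) to be the least integer \(\ge1\) with \(c_sW^{a_s}\ge n\), namely \(W=\max\{1,\lceil (n/c_s)^{1/a_s}\rceil\}\). Then \(|F_s(W)|\ge n\). By monotonicity, or directly by passing to an \(n\)-element subset of \(F_s(W)\),
\[
h(n)\le |D(F_s(W))|\le C_sW^{b_s}.
\]
Since \(W\le (n/c_s)^{1/a_s}+1\le 2(n/c_s)^{1/a_s}\) whenever \((n/c_s)^{1/a_s}\ge1\), we get \(W^{b_s}\le 2^{b_s}c_s^{-b_s/a_s}n^{b_s/a_s}\). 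In the remaining case \(n<c_s\), only finitely many \(n\) are involved and \(W=1\), so they are absorbed into the constant. Together this gives \(h(n)\ll_s n^{b_s/a_s}\).

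The last step is the arithmetic identity. We have \(b_s/a_s=2^{s+1}/(2^{s+2}-1)\). Writing \(2^{s+1}=\tfrac12(2^{s+2}-1)+\tfrac12\) gives \(\alpha_s=\tfrac12+\tfrac1{2(2^{s+2}-1)}\). For the dimension form, \(d_s+1=3\cdot2^s\) and \(4d_s+1=3\cdot2^{s+2}-3=3(2^{s+2}-1)\). Hence \(2(d_s+1)/(4d_s+1)=2\cdot 2^s/(2^{s+2}-1)=\alpha_s\).

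The proof has no real obstacle. The only point that needs care is the rounding step from \(W\) to \(n\), which costs only a constant factor depending on \(s\) because \(b_s\) and \(a_s\) are fixed. Tracking that constant is not needed here; it matters only for the later optimization over \(s\).
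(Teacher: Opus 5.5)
Your proof is correct and is essentially the paper's argument: take \(W=\lceil (n/c_s)^{1/a_s}\rceil\), pass to an \(n\)-element subset of \(F_s(W)\), and bound \(C_sW^{b_s}\le C_s2^{b_s}c_s^{-b_s/a_s}n^{b_s/a_s}\); you additionally check the two arithmetic identities, which the paper leaves implicit. Your separate case \(n<c_s\) is harmless but vacuous, since the recursion \(c_0=1/2\), \(c_{s+1}=c_s^2\) gives \(c_s\le 1/2<n\).
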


\begin{proof}
Fix \(s\), and choose \(W=\lceil(n/c_s)^{1/a_s}\rceil\).  Then \(|F_s(W)|\ge n\), and any \(n\)-element subset of \(F_s(W)\) has positive-difference set contained in \(D(F_s(W))\).  By \Cref{lem:erdos539-equivalence,prop:erdos539-iteration},
\[
h(n)\le C_sW^{b_s}\le C_s2^{b_s}c_s^{-b_s/a_s}n^{b_s/a_s}\ll_s n^{\alpha_s}.
\]
The displayed expression in terms of \(d_s\) follows from \(d_s=3\cdot2^s-1\).
\end{proof}

\begin{proof}[Proof of \Cref{thm:erdos539-maintext}]
The lower bound is \Cref{prop:erdos539-lower}.  For the upper bound, the logarithmic control of the constants in \Cref{prop:erdos539-iteration}, together with the proof of \Cref{prop:erdos539-fixed-dim}, gives
\[
h(n)\le \exp(A2^s)n^{1/2+1/(2a_s)}
\]
with an absolute constant \(A\).  For \(n\) large, choose \(s\) so that
\[
2^s\le\sqrt{\log n}<2^{s+1};
\]
the remaining bounded values of \(n\) are absorbed by increasing the final constant.  This optimization lets the ambient dimension \(3\cdot2^s-1\) grow with \(n\), which is allowed in the definition of \(H(n)\).  Since \(a_s\ge2^s\),
\[
\log h(n)
\le \frac12\log n + \frac{\log n}{2a_s}+A2^s
\le \frac12\log n+C\sqrt{\log n}
\]
for an absolute constant \(C\).  The lower bound then gives
\[
\lim_{n\to\infty}\frac{\log h(n)}{\log n}=\frac12.
\]
\end{proof}

\subsection{Lean formalization}
\label{app:erdos539-lean}
Our repository contains a Lean~4 formalization of the main combinatorial and asymptotic ingredients, created by Codex with GPT-5.5 (xhigh). We manually verified its semantic correctness. The project is organized as a Lake package using Lean~4.29.1 and mathlib~4.29.1.  The modules \texttt{Erdos539.NumberBridge} and \texttt{Erdos539.NumberLower} formalize the passage between finite integer sets and positive-difference sets; \texttt{Erdos539.Base}, \texttt{Erdos539.Suspension}, and \texttt{Erdos539.Iteration} formalize the base construction, separated suspension, and iterated upper bounds; and \texttt{Erdos539.Main} collects these ingredients into the logarithmic upper and lower estimates for the dimension-free exponent. Concretely, the development proves the universal lower bound, upper bounds with explicit constants for each fixed suspension depth, and the exponent conclusion \(\lim_{n\to\infty}\log h(n)/\log n=1/2\) of \Cref{thm:erdos539-maintext}. The sharper explicit upper bound \(h(n)\le n^{1/2}\exp(C\sqrt{\log n})\) is at present established only by the informal proof above.


\section{Library Screenshots} \label{app:library}

\Cref{fig:library-execution-start,fig:library-node-add,fig:library-node-view,fig:library-node-edit} show the developer interface used to configure and inspect ProofCouncil workflows. The interface exposes workflow presets as editable DAGs. For more details, we refer to our repository.

\begin{figure*}[t]
    \centering
    \includegraphics[width=\textwidth]{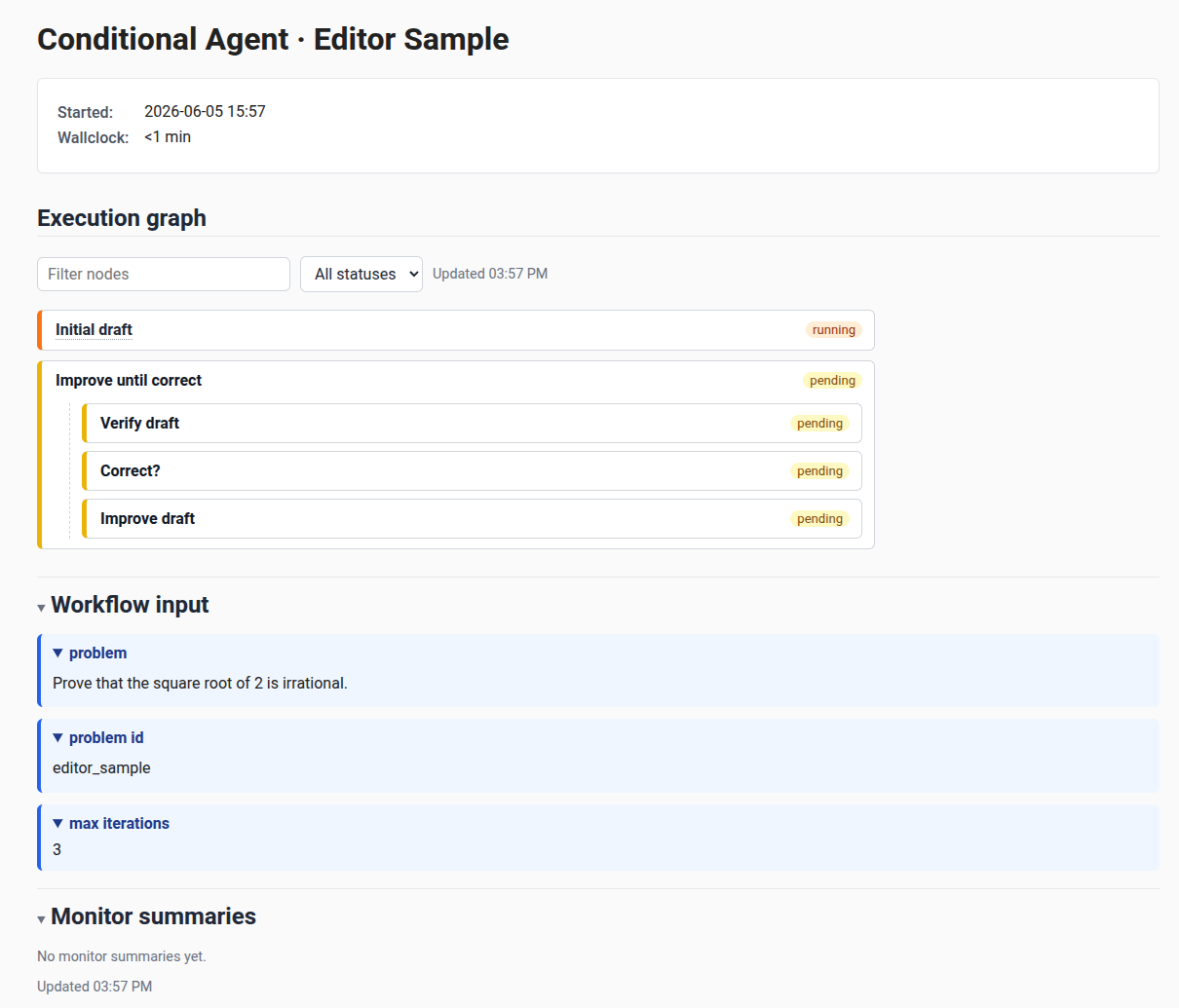}
    \caption{View of the execution page, just after starting a run.}
    \label{fig:library-execution-start}
\end{figure*}

\begin{figure*}[t]
    \centering
    \includegraphics[width=0.55\textwidth]{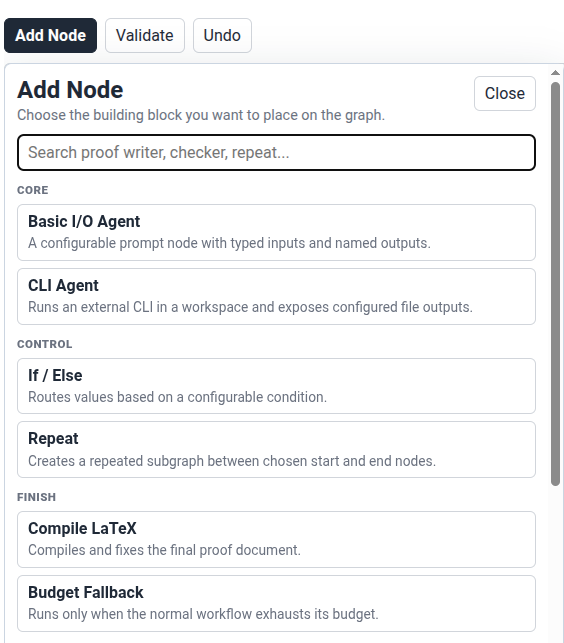}
    \caption{Node-add menu for inserting reusable workflow components into a DAG.}
    \label{fig:library-node-add}
\end{figure*}

\begin{figure*}[t]
    \centering
    \includegraphics[width=\textwidth]{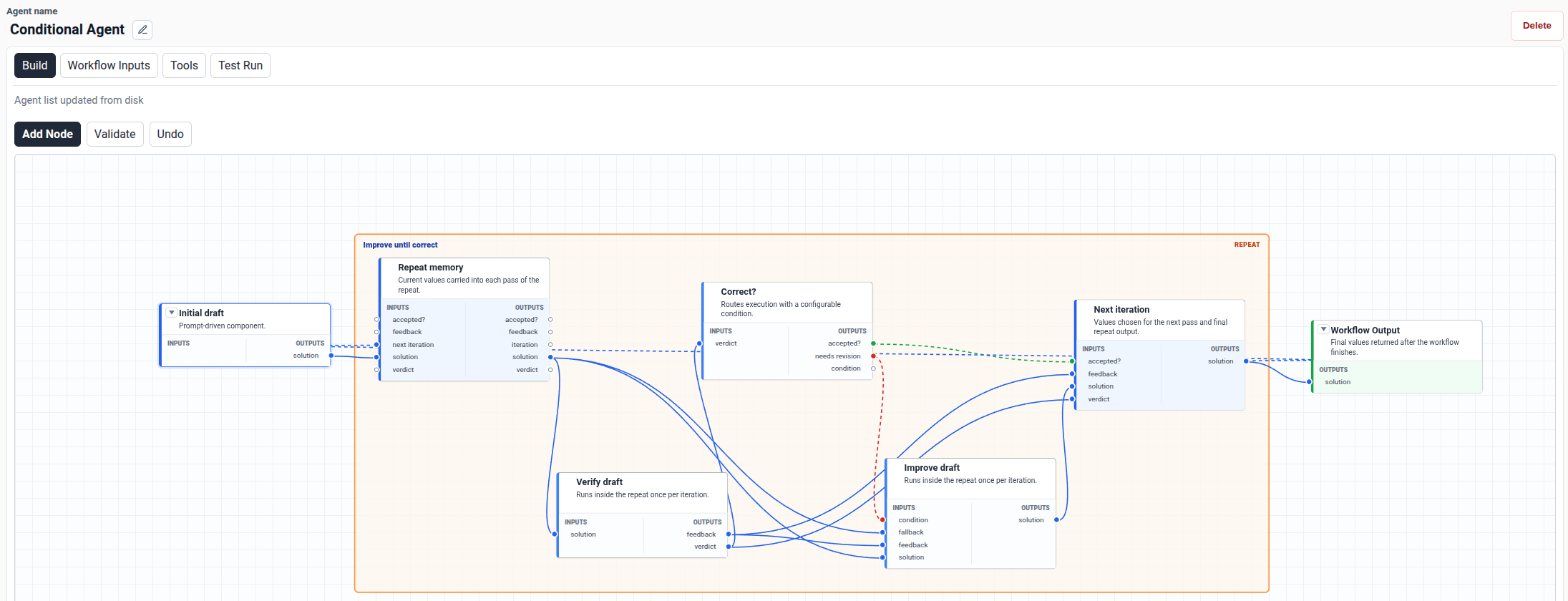}
    \caption{Graph view for inspecting workflow structure and the state of individual nodes.}
    \label{fig:library-node-view}
\end{figure*}

\begin{figure*}[t]
    \centering
    \includegraphics[width=\textwidth]{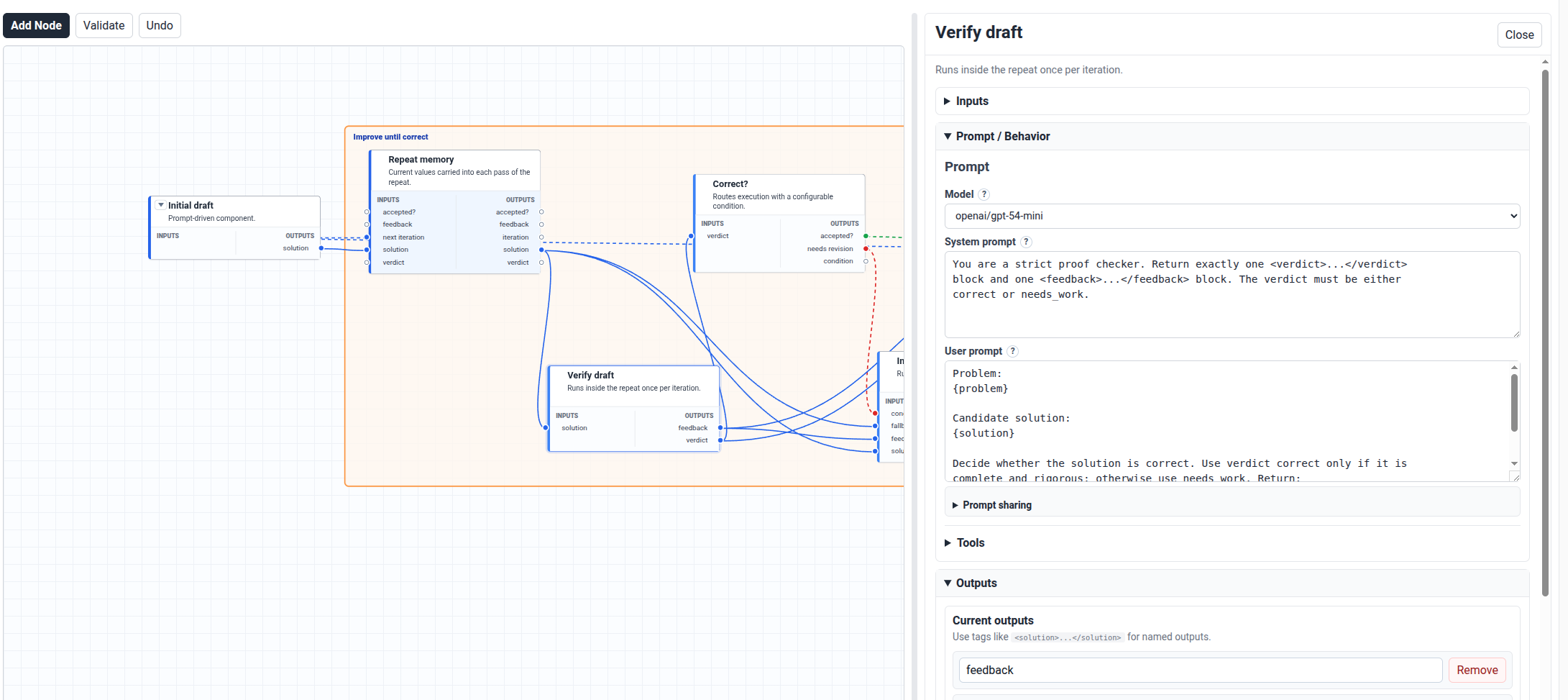}
    \caption{Node editor for changing prompts, model configuration, input bindings, and output schema.}
    \label{fig:library-node-edit}
\end{figure*}

\clearpage
\section{Prompts} \label{app:prompts}

In this section, we provide all prompts used in ProofCouncil.

\newcommand{\renderpromptfile}[2]{%
\begingroup\ttfamily\scriptsize\sloppy\setlength{\parindent}{0pt}\setlength{\parskip}{0.10ex}
\input{prompts/#2}
\par\endgroup
}

\newcommand{\promptfile}[2]{%
\begin{promptbox}{#1}
\begingroup\ttfamily\scriptsize\sloppy\setlength{\parindent}{0pt}\setlength{\parskip}{0.10ex}
\input{prompts/#2}
\par\endgroup

\end{promptbox}
}

\newcommand{\userpromptfile}[2]{%
\begin{userpromptbox}{#1}
\begingroup\ttfamily\scriptsize\sloppy\setlength{\parindent}{0pt}\setlength{\parskip}{0.10ex}
\input{prompts/#2}
\par\endgroup

\end{userpromptbox}
}

\newcommand{\developerpromptfile}[2]{%
\begin{developerpromptbox}{#1}
\begingroup\ttfamily\scriptsize\sloppy\setlength{\parindent}{0pt}\setlength{\parskip}{0.10ex}
\input{prompts/#2}
\par\endgroup

\end{developerpromptbox}
}

\subsection{Author prompts}

\developerpromptfile{Author round 0 developer prompt}{author_round0_system_container.tex}
\userpromptfile{Author round 0 user prompt}{author_round0_user_container.tex}
\developerpromptfile{Author loop developer prompt}{author_loop_system_container.tex}
\userpromptfile{Author loop user prompt}{author_loop_user_container.tex}

\subsection{Critic prompts}
The stateful critic user prompt is appended to the critic's conversation history in every round, but always starts with a fresh critic user prompt at first. When a stateful critic accepts the proof, a fresh critic is called to provide an additional audit. The prompt for this additional fresh critic is slightly different compared to the fresh critic called in the main loop, as it is not given author thinking traces.

\userpromptfile{Fresh critic user prompt}{critic_fresh_user.tex}
\userpromptfile{Fresh Critic user prompt (after stateful acceptance)}{critic_fresh_user_no_thinking.tex}
\userpromptfile{Critic stateful user prompt}{critic_stateful_user.tex}

\subsection{Auxiliary-agent prompts}

\developerpromptfile{Council member developer prompt}{council_member_system.tex}
\userpromptfile{Council member user prompt}{council_member_user.tex}
\promptfile{Compute worker prompt}{compute_worker_prompt.tex}

\subsection{Prescreen prompt}

\developerpromptfile{Prescreen developer prompt}{prescreen_system.tex}
\userpromptfile{Prescreen user prompt}{prescreen_user.tex}

\subsection{Postscreen progress-audit prompt}

\developerpromptfile{Postscreen progress-audit developer prompt}{postscreen_progress_audit_system.tex}
\userpromptfile{Postscreen progress-audit user prompt}{postscreen_progress_audit_user.tex}

\end{document}